\theoremstyle{plain}
\newtheorem{lem}{Lemma}
\newtheorem{thm}[lem]{Theorem}
\theoremstyle{definition}
\newtheorem{defn}{Definition}
\newtheorem{example}{Example}
\theoremstyle{remark}
\newtheorem{rem}[lem]{Remark}
\newcommand{\R}{\mathbb{R}}
\newcommand{\Q}{\mathbb{Q}}
\newcommand{\Z}{\mathbb{Z}}
\newcommand{\C}{\mathbb{C}}
\def\P{\mathbb P}
\def\H{\mathbb H}
\newcommand{\D}{\mathbb{D}}
\newcommand{\ci}{\mathrm{i}}
\newcommand{\qi}{\mathbf{i}}
\newcommand{\qj}{\mathbf{j}}
\newcommand{\qk}{\mathbf{k}}
\newcommand{\eps}{\epsilon}
\DeclareMathOperator{\norm}{N}
\DeclareMathOperator{\arccot}{arccot}
\DeclareMathOperator{\rank}{rank}
\newcommand{\ii}{\mathbf{i}}
\newcommand{\jj}{\mathbf{j}}
\newcommand{\kk}{\mathbf{k}}
\title{An Algebraic Study of Linkages with Helical Joints}
\author{Hamid Ahmadinezhad, Zijia Li, and Josef Schicho}
\keywords{Algebraic Kinematics; Bond Theory; Helical Joint; Overconstrained Linkages.}
\subjclass[2010]{70B15, 14Q99, 14H99 }
\begin{document}

\begin{abstract} Methods from algebra and algebraic geometry have been used in various ways to study 
%  the dynamics of %Zijia: follows the suggestion 
  linkages in kinematics. These methods have failed so far for the study of 
 linkages with helical joints (joints with screw motion), because of the presence of some non-algebraic 
 relations. In this article, we explore a delicate reduction of some analytic equations in kinematics
to algebraic questions via a theorem of Ax. As an application, we give a classification of mobile closed 
$5$-linkages with revolute, prismatic, and helical joints.
\end{abstract}

\maketitle

\setcounter{tocdepth}{1}
\tableofcontents

\section{Introduction}

Linkages, and in particular closed linkages, are a crucial object of study in the modern theory of kinematics.
The use of algebra and geometry for studying linkages is very classical and goes back to Sylvester, Kempe,
Cayley and Chebyshev.

A linkage, as appearing in robotics/mechanical engineering, biology, as well as modelling of molecules in chemistry, etc.,
is a mechanical structure that consists of a finite number
of rigid bodies -- its {\em links} -- and a finite number of {\em joints} that connect the links together, 
so that they possibly produce a motion.
A linkage is called {\em closed} if its number of links and joints are equal and they are connected cyclically.
%\vspace{2cm}
%
%A picture with the two kinds of linkages goes here.
%
%\vspace{2cm}
%
We consider four types of joints:
\begin{enumerate}[]
\item (R) revolute joints: allow rotations around a fixed axes; 
\item (P) prismatic joints: allow translations in a fixed direction;
\item (C) cylindrical joints: allow rotations around a fixed axes and translations in the
the direction of the axes;
\item (H) helical joints: allow the motions of a cylindrical joint where the rotation
angle and the translation length are coupled by a linear equation.
\end{enumerate}

We will use the notation R-joint for a revolute joint, and similarly for other types.
Note that the dimensions of the set of allowed motions (the degree of freedom) is $1$ for joints of type R, P, and H, and
$2$ for C-joints.

The {\em configuration set} of a closed linkage $L$, denoted by $K_L$, is the set of possible simultaneous motions of all joints
(see Definition~\ref{conf} for a precise description).
The dimension of $K_L$ is called the {\em mobility} of $L$, and $L$ is called {\em mobile}
if the mobility is positive.

It is known that $K_L$ can be described by analytic equations; see \cite{seligbook}, page 356.
If there are no H-joints, then we also have
a description by algebraic equations. We refer to \cite{history} for a historic overview 
of the use of geometric algebra in kinematics. This is the subject that has attracted algebraists the most. 
It should be mentioned, however, that also number theory has been used for studying linkages:
in \cite{gs}, reduction modulo prime numbers are considered in order to construct a new family
of Stewart-Gough platforms.
Below, in Section~\ref{sec:bonds} 
we briefly explain the algebraic setup, as well as the theory of bonds, a rather new combinatorial technique 
that has shown to be very useful for analysing closed linkages with R-joints \cite{hss2}. 
We also explain the analytical relations in the presence of H-joints. There are also other (numerical) 
algebraic methods that are applied in kinematics, see for example \cite{bpsc, res, ws}.

A closed linkage with $n$ joints, where all joints are R-joints, is denoted by $n$R-linkage. 
We denote by $n$-linkage a linkage with $n$ joints where no information on the type of joints is specified. 
It is easy to imagine that a $3$R-linkage does not have a motion, and hence its configuration set is trivial. 
On the other hand a generic $n$R-linkage for $n\geq 7$ has positive mobility (see \cite{seligbook}, page 356), and
hence there is not much to study. So the interesting cases are when $n=4,5$ or $6$. Nowadays we have a full 
classification of $4$R- and $5$R-linkages with mobility one ($\dim K_L=1$), 
and we know many cases for $n=6$ (see \cite{dietmaier}). It is an open research problem 
to classify all $6$R-linkages. These classification problems are considered by algebraists. 
Because of the nature of other types of linkages it seemed difficult, or rather impossible, to be able to use 
any of the present algebraic techniques for linkages with H-joints.

\subsubsection*{\bf What is new in this article?}
As a main result, we show that unexpected mobility of a linkage with H-joints, i.e., a mobility
that is strictly bigger than predicted by the Gr\"ubler-Kutzbach-Chebychev formula which simply
counts parameters and equational restrictions, can always be explained algebraically. 
Let $L$ be a linkage with H-joints, and let $L^\prime$ be the linkage 
obtained from $L$ by replacing all H-joints by C-joints. It is clear that the configuration set of $L$ 
is a subset of the configuration set of $L^\prime$.
The relation between the configuration set of $L^\prime$ and the Zariski closure of the configuration
set of $L$ will be made very precise (Theorem~\ref{thm:ax}), with the help of Ax's Theorem~\cite{ax}
on the transcendence degree of function fields with exponentials.  
Note that Ax's theorem is originally about Schanuel's conjecture in number theory and has no apparent connections to kinematics.

The mobile linkages with 4 joints of type H, R, or P have been
classified in \cite{Delassus}. Here (more precisely in Theorem~\ref{thm:h5}) 
we give a classification of mobile linkages with 5 joints of type H, R, or P.
Using our main result, we reduce to linkages with joints of type R or P only. The classification of mobile 5R-linkages
has been done in \cite{Karger}, but for linkages with both R-joints and P-joints, we could not find a complete
classification in the literature. On the other hand, this classification is not difficult when we use the theory
of bonds, so we also give it in here (Theorem~\ref{thm:p}). 

\subsubsection*{\bf Structure of the paper}
In Section \ref{sec:bonds} we setup a mathematical language to describe and to analyse linkages with arbitrary types
of joints, we recall the theory of bonds for R-joints and we introduce the adaptions of this theory that make it
work also for P-joints. In Section \ref{non-helical} %we use %Zijia: remove one "we use"
 we use these algebraic methods to classify mobility
for 5-linkages with P- and R-joints. In Section \ref{construction-helical} we reduce H-joints to C-joints.
Finally, in Section \ref{classification-helical} we use Ax's theore and the results of the previous sections
to classify mobile 5-linkages with helical joints.

Because the two proofs of the classification results, i.e., Theorem~\ref{thm:p} and Theorem~\ref{thm:h5}, are methodically
quite different, it is possible to apply a ``filter'' while reading in case one is only interested in the main result
(or in one of the proofs using Ax's theorem). In that case, the reader could omit the second half of Section \ref{sec:bonds}, right before bonds
are introduced, and the whole Section \ref{non-helical}, except for Theorem~\ref{thm:p} which has to be taken for
granted.

\section{Algebraic set up} \label{sec:bonds}

In this section we set up the notation for an algebraic description of linkages with arbitrary joints.
Then we briefly recall bonds for R-joints as defined in \cite{hss2}. Finally, we introduce bonds for P-joints
and prove some basic properties of them.

\subsection*{Dual quaternions and configuration set}

Suppose $\R$ is the set or real numbers, $\D:=\R+\eps\R$ is the ring of dual numbers and $\eps^2=0$. 
% Denote by
% $\H$ the non-commutative algebra of quaternions, and let $\D\H:=\D\otimes_\R\H$. In other words
% \[\D\H=\{h=A+\eps B\text{ where } A, B\in\H\text{ and }\eps^2=0\}\]
% Recall that
% \[\H=\{A=a_0+a_1\ii +a_2\jj +a_3\kk \text{ where } \ii^2\!=\jj^2\!=\kk^2\!=-1\text{ and }\ii.\jj\!=\!\kk\,, \jj.\kk\!=\!\ii\,, \kk.\ii\!=\!\jj\}\]
%Zijia: change the order a bit
Denote by $\H$ the non-commutative algebra of quaternions, where 
\[\H=\{A=a_0+a_1\ii +a_2\jj +a_3\kk \text{ where } \ii^2\!=\jj^2\!=\kk^2\!=-1\text{ and }\ii.\jj\!
 =\!\kk\,, \jj.\kk\!=\!\ii\,, \kk.\ii\!=\!\jj\}.\]
Let $\D\H:=\D\otimes_\R\H$ denote the dual quaternions, i.e.
\[\D\H=\{h=A+\eps B\text{ where } A, B\in\H\text{ and }\eps^2=0\}.\]
We call $A$ the {\em primal} part of a dual quaternion %Zijia: add `` a dual quaternion''
 $h$, and $B$ the {\em dual} part of $h$. 
The conjugate of $A\in\H$, as above, is defined by $\overline{A}= a_0-a_1\ii -a_2\jj -a_3\kk$. 
This extends naturally to define the conjugate dual quaternion
of $h$ by
\[\overline{h}=\overline{A}+\eps\overline{B}\]
The norm function $\norm\colon\D\H\rightarrow\D$ is then defined by 
%  $\norm(h) = h.\overline{h}$;
%Zijia: new
$\norm(h) = h.\overline{h}=A\overline{A}+\eps(A\overline{B}+\overline{A}B)$;
  and the latter is called the \emph{norm} of $h$.

Note that $\D\H$ can be regarded as a real $8$-dimensional vector space, and projectivising $\D\H$ we
obtain $\P^7$. 
The {\em Study quadric} $S$ is a hypersurface of this projective space defined by the quadratic equation 
\[\sum_{i=0}^3 a_ib_i=0\]
where $h=a_0+a_1\ii +a_2\jj +a_3\kk+\eps(b_0+b_1\ii +b_2\jj +b_3\kk)$. In other words
\[S=\{h\in\P^7\text{ such that }\norm(h)\in\R\}\]

The linear $3$-space represented by all dual quaternions with zero primal part is denoted
by $E$. It is contained in the Study quadric, and the complement $S-E$ is
closed under multiplication and multiplicative inverse; hence $S-E$ 
forms a group, which is isomorphic to the group of Euclidean displacements (see \cite[Section~2.4]{husty10}).

For a natural number $n\in\mathbb{N}$, 
a {\em linkage} with $n$ joints is described as an $n$-tuple $L=(j_1,\dots,j_n)$, where each $j_i$ represents a joint. 
We will use cyclic notation for joint indices, i.e., $j_{n+1}=j_1$.

\subsection*{Quantisation}
The type of joint specifies which data must be given in order to determine the set of possible motions, as follows. Suppose $k\in\{1,\dots,n\}$.
\begin{enumerate}[I.]
\item If $j_k$ is an {\em R-joint}, then we specify a dual quaternion $h_k$ such that $h_k^2=-1$.
We write $p_k$ and $q_k$ for the primal and the dual part of $h_k$, i.e., $h_k=p_k+\eps q_k$
with $p_k,q_k\in\H$. 
The set of possible motions is parametrized by the joint parameter $t_k\in\P^1$,
which is the rotation with an angle of $2\arccot(t_k)$. The rotation corresponds
to the dual quaternion $m_k=t_k-h_k$, and to $1$ if $t_k=\infty$. 
Note that the latter means we have fixed the initial position at $\infty$.

\item If $j_k$ is a P-joint, then we specify a quaternion $p_k\in\H$ such that $p_k^2=-1$.
The set of possible motions is parametrized by the joint parameter $s_k\in\R$,
and the translation corresponds to the dual quaternion $m_k=1-\eps s_kp_k$. 

\item If $j_k$ is a {\em C-joint}, then we specify a dual quaternion $h_k=p_k+\eps q_k$ such that $h_k^2=-1$,
The set of possible motions is parametrized by the joint parameters $(s_k,t_k)$ and 
corresponds to the dual quaternion $m_k=(1-\eps s_kp_k)(t_k-h_k)$.

\item If $j_k$ is an {\em H-joint}, then we specify a dual quaternion $h_k=p_k+\eps q_k$ such that $h_k^2=-1$,
and a nonzero real number $g_k$. The number $\frac{g_k}{2\pi}$ is often refereed to as the {\em pitch} in mechanical engineering. 
The joint parameter is $\alpha_k\in\R$, and the motion corresponds to the dual quaternion
$m_k=(1-\eps g_k\alpha_k p_k)(1-\tan(\frac{\alpha_k}{2})h_k)$.

\end{enumerate}

The data which must be specified for all joints are called the {\em geometric parameters}. Note
that when the linkage moves, the geometric parameters also change. However, there are functions in 
the geometric parameters that do not change when the linkage moves, such as the normal distance and the
angle between neighbour rotation or helical axes.

\begin{defn}\label{conf}The {\em configuration set} $K$ is the set of all parameters $t_k,s_k,\alpha_k$ such that the closure equation
\begin{equation} \label{eq:clos}
 m_1 m_2\cdots m_n \equiv 1 
\end{equation}
is fulfilled. The symbol $\equiv$ stands for projective equivalence, i.e., up to multiplication by a nonzero real scalar.

The {\em mobility of $K$} is the dimension of the solution set of Equation~\ref{eq:clos} as a complex analytic
set in the parameter space. If it is positive, then we say that $L$ is {\em mobile}.
\end{defn}

We are interested in mobile linkages, and mainly those with mobility~$1$. Finding such linkage for
given types, and numbers, of joints is a main goal. This leads to analysing the solutions of Equation~(\ref{eq:clos}).

If all values for $t$-parameters are $\infty$ and all values for $s$- and $\alpha$-parameters are $0$,
then all $m_k$ are equal to 1 and Equation~(\ref{eq:clos}) is fulfilled. This point of $K$ is called the
{\em initial configuration} of~$L$. 

\begin{rem}
The dimension of $K$ as a real analytic set would be a more interesting number than the mobility we defined above,
but it is harder to control. For instance, planar $4$R-linkages always have mobility~$1$, but the real dimension
can also be $0$. In any case, the complex dimension is an upper bound for the real dimension, and if the two numbers
are not equal then all real configurations must be singularities of the complex configuration space.
\end{rem}

The remaining part of this section is concerned with the theory of bonds. If the reader is willing to
believe Theorem~\ref{thm:p}, he/she may jump forward to this theorem, skip its proof and proceed
with Section~\ref{construction-helical}, in order to get faster to the application of Ax's theorem.

\subsection*{Bonds}

For linkages with R-joints, bonds have been introduced in \cite{hss2} as an algebraic tool
which is used to describe and understand the algebraic structure of the configuration set.
Informally speaking, bonds are the points in the boundary of the compactification of the complex
configuration set. The closure equation degenerates in the boundary, and one obtains useful
algebraic consequences. At this moment, we do not have any geometric intuition for bonds,
we just use them mainly as a tool for studying the (real) configuration set and for 
deriving geometric conditions of the rotation axes.
Here is the precise definition.

\begin{defn} Let $L$ be an $n$-linkage with joints of type R or P.
Suppose $Z$ is the projective closure of the complexification of $K_L$ in $(\P^1_\C)^{n}$.
Then the {\em bond set} $B$ is defined as the intersection of $Z$ and the solution set of the bond equation
\begin{equation} \label{eq:bond}
 m_1 m_2\cdots m_n = 0 .
\end{equation}
\end{defn}

\begin{rem}
In \cite{hss2}, where the theory of bonds is initially developed, only linkages with mobility $1$ are considered 
and bonds are defined as points on the normalisation of the curve $K$.
In this paper, however, this is not necessary because we do not need multiplicities of bonds. Hence we can afford to
simply say a bond is a point of $B$.
\end{rem}

%If $K$ has mobility two or more, or if it has several components of dimension~1, then we can also define
%the {\em restricted bond set} with respect to a proper algebraic subset $K_0\subset K$, 
%by changing $K$ in the above definition to $K_0$.

\subsubsection*{\bf Construction of the bond diagram}
Let $\beta$ be a bond. We say that $\beta$ is {\em attached} to a joint $m_k$ if $\norm(m_k(\beta))=0$.
This is equivalent to $t_k^2+1=0$ if $j_k$ is an R-joint, and to $s_k=\infty$ if $j_k$ is a P-joint.
If $\beta$ is attached to two different joints $j_k$ and $j_\ell$, then we say that $\beta$ {\em connects} 
$j_k$ and $j_\ell$ if and only if
\[ m_k(\beta) m_{k+1}(\beta) \cdots m_\ell(\beta) = m_\ell(\beta) m_{\ell +1}(\beta) \cdots m_k(\beta) = 0 \]
(this definition is slightly different from the definition in \cite{hss2}, but the more complicated definition
using multiplicities is not needed here).

\begin{defn} Suppose $G=(V,E)$ is the graph of a linkage, where vertices, elements of $V$, 
represent the rigid bodies of $L$ and $v_i,v_i\in V$ are connected via an edge $e\in E$ is there is a 
joint between them. The {\em bond diagram} is then defined to be this graph together with 
the following extra information: two edges are connected if their corresponding joints 
are connected via a bond.\end{defn}

\begin{example}
There is a unique family of 4R linkages such that the four axes are not all parallel and do not all have
a common point, the Bennett linkage (see \cite{bennett}). Its configuration curve can be defined by the equations
\[ t_1=t_3, t_2=t_4, at_1s-t_2+b=0 , \]
where $a,b\in\R$, $(a,b)\ne (1,0)$, $a\ne 0$ are parameters. The bond set is 
\[ B = \{ (\pm\ci,a\pm\ci+b,\pm\ci,a\pm\ci+b), ((\pm\ci-b)/a,\pm\ci,(\pm\ci-b)/a,\pm\ci)\} , \]
and the bond diagram is shown in Figure~\ref{fig:bonds}.
\end{example}

We define the offset $o(h_1,h_2,h_3)$ of three lines $h_1,h_2,h_3$ as follows. We assume
that neither $h_1||h_2$ nor $h_2||h_3$, where the symbol $||$ is used to show the two lines are parallel (otherwise, the
offset is not defined). Let $n_{12}$ be the common normal of $h_1,h_2$, i.e., the unique line intersecting both
$h_1$ and $h_2$ at a right angle. Let $n_{23}$ be the common normal of $h_2,h_3$. 
Then $o(h_1,h_2,h_3)$ is defined as the signed distance between the intersection of $h_2,n_{12}$ 
and the intersection of $h_2,n_{23}$. 
The sign comes from the orientation of the line $h_2$ represented as a dual quaternion.
The offset of three consecutive R-joints is fixed when the linkage moves.

\begin{example}
Assume that the lines $h_1,h_2,h_3$ are coplanar and pairwise not parallel. Then $o(h_1,h_2,h_3)$ is the 
distance of the intersection points $h_1\cap h_2$ and $h_2\cap h_3$. If $h_3$ is rotated around $h_2$ by
an angle different from $\pi$, call the result $h_3'$, then $h_1,h_2,h_3'$ will not be coplanar,
but we still have $o(h_1,h_2,h_3)=o(h_1,h_2,h_3')$.
\end{example}

We recall some well-known facts on the bond diagram, and refer to \cite{hss2} for details. 

\begin{enumerate}[(i)]
\item Every bond is attached to at least two joints.
\item If a bond is attached to a joint $j_k$, then it connects $j_k$ to at least one other joint.
\item If a joint $j_k$ actually moves during the motion, then it is attached to at least one bond. 
%  This is a special case of \cite{hss2}, Theorem 4.
\item Two consecutive R-joints, $j_i$ and $j_{i+1}$, are not connected by a bond.
\item If $j_i,j_{i+1}$ and $j_{i+2}$ are R-joints with axes $h_i,h_{i+1}$ and $h_{i+2}$ such that 
  $j_i$ is connected to $j_{i+2}$ and $h_1||h_2$, then $h_2||h_3$.
\item If $j_i,j_{i+1}$ and $j_{i+2}$ are R-joints with axes $h_i,h_{i+1}$ and $h_{i+2}$ such that $h_i$ 
  is not parallel to $h_{i+1}$, and $j_i$ is connected to $j_{i+2}$, then $o(h_i,h_{i+1},h_{i+2})=0$. 
\end{enumerate}

\section{5-linkages with Revolute and Prismatic Joints}\label{non-helical}

In this section we classify mobile closed 5-linkages with R- and P-joints. For the case of R-joints only,
this is well-known, as described below. The general case is handled by bond theory; this makes the proof quite conceptual and
avoids long and technical calculations.
We use the results of this section to classify 5-linkages with helical joints in Section~\ref{classification-helical}.

If $L$ has two neighbouring R-joints with equal axes or two neighbouring P-joints with equal directions,
then we say that $L$ is {\em degenerate}. 
Throughout this section, we assume that $n=5$ or $n=4$, and $L=(j_1,\dots,j_n)$ is a mobile linkage 
with configuration set $K$.
We also assume that $L$ is not degenerate, and that no joint parameters are constant during motion of the linkage
(otherwise one could easily make $n$ smaller).

If $L$ has only R-joints, then we have one of the following three cases (\cite{Karger}; see \cite{hss2} for
a proof using bond theory).

\begin{enumerate}
\item $L$ is spherical, i.e., all rotation axes meet in the same point; then $L$ has mobility 2.
\item $L$ is planar, i.e., all rotation axes are parallel; then $L$ has mobility 2.
\item $L$ is a Goldberg linkage, constructed as follows: take two spatial 4-linkages with one joint and
	one link in common; then remove the common link. The mobility of the Goldberg linkage is 1.
	If $h_1,\dots,h_5$ are the rotation axes,
	then $$o(h_4,h_5,h_1)=o(h_5,h_1,h_2)=o(h_1,h_2,h_3)=0,\text{ and }o(h_2,h_3,h_4)=\pm o(h_3,h_4,h_5)$$
	up to cyclic permutation of joints.
\end{enumerate}

\begin{figure}[htb]
  \centering
  \includegraphics{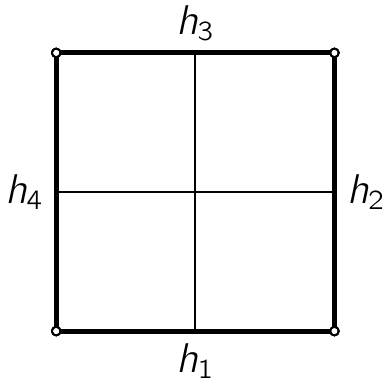}
  \hspace{1.6cm}
  \includegraphics{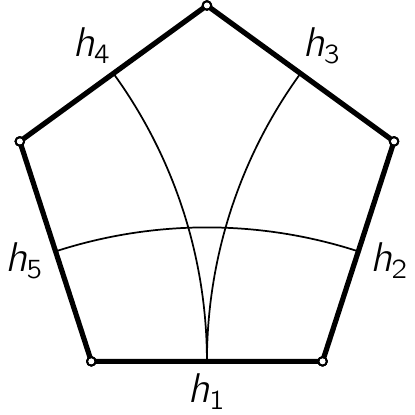}
  \caption{Bond diagrams for the Bennett 4R and the Goldberg 5R linkage. The vanishing of all offsets
	of the Bennett 4R linkage and of the three offsets $o(h_4,h_5,h_1)$,  $o(h_5,h_1,h_2)$, and $o(h_1,h_2,h_3)$
	of the Goldberg 5R linkage are easy consequences of bond theory.}
  \label{fig:bonds}
\end{figure}

By considering the specified data and parametrized motions modulo $\eps$, we may construct a spherical linkage $L'$,
the {\em spherical projection} of $L$. The P-joints of $L$ disappear, their translation motions are projected
to the identity. Parallel joint axes of $L$ are projected to identical axes of $L'$. Note that $K$ is projected
to an algebraic subset of $K'$ (the configuration set of $L'$). This subset has positive dimension if and only
if $L$ has at least one R-joint.

\begin{lem} \label{lem:3p}
If $L$ has $2$ or more P-joints, then all rotation axes are parallel.
\end{lem}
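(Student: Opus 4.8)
My plan is to bypass the bond machinery entirely and argue through the spherical projection $L'$ introduced just above the lemma. Write $r$ for the number of R-joints. Since $L$ has at least two P-joints and $n\le 5$, we have $r\le 3$. If $r\le 1$ the conclusion is vacuous, so assume $r\in\{2,3\}$. Because $r\ge 1$, the image of $K$ in the configuration set $K'$ of $L'$ is positive-dimensional; and since no joint parameter of $L$ is constant on $K$, while the rotation parameters $t_k$ are unaffected by reduction modulo $\eps$, no $t_k$ is constant on this image either.

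The heart of the matter is then a statement purely about spherical linkages: a spherical linkage with at most three R-joints whose configuration set has a positive-dimensional component along which no joint parameter is constant must have all of its axes coincident. Granting this, all rotation axes of $L'$ coincide, which is exactly to say that the rotation axes of $L$ are all parallel, since parallel axes of $L$ are precisely those identified under the projection.

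To prove the spherical statement I would pass to unit quaternions and read the primal part of the closure equation, namely that the ordered product $\prod(t_k-p_k)$ over the R-joints is a real scalar, as the condition that the composition of the corresponding rotations about the fixed axes $p_k$ (all through the origin) is the identity. For $r=2$ this forces the two rotations to be mutually inverse; if the axes are distinct the only solution is that both are trivial, which is excluded by positive-dimensionality and the no-constant hypothesis, whereas coincident axes give the expected one-parameter motion. For $r=3$ I would write the closure as $R_1R_2=R_3^{-1}$ and impose that the vector part of $R_1R_2$ be parallel to the third axis; expanding the products and resolving in the basis $\{p_1,p_2,p_1\times p_2\}$ turns this into two scalar equations. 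A short case analysis, according to whether the three directions are linearly independent, coplanar, or partly coincident, then shows that every positive-dimensional solution branch lies in a locus where one of the rotations is the identity, i.e.\ where one joint parameter is constant, unless all three axes coincide.

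The main obstacle is precisely this last computation: controlling all positive-dimensional components of the spherical $3$R configuration set and verifying that each nontrivial one freezes a joint. Extra care is needed because the two R-joints of $L$ that become parallel are identified to a single axis of $L'$, so the spherical picture may exhibit a repeated axis (an ``$ABA$'' or ``$AAB$'' pattern); one must check that in each such pattern the residual freedom still pins the odd joint, so that the no-constant hypothesis excludes it. Once this spherical classification is in hand, the reduction above closes the argument. It is worth noting that the bound $r\le 3$, and hence the hypothesis of at least two P-joints, is essential: a spherical $4$R linkage is genuinely mobile with non-parallel axes, so the statement would fail with only one P-joint.
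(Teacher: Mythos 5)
Your argument is essentially the paper's: its proof likewise passes to the spherical projection $L'$, observes that it is a spherical linkage with at most $3$ joints, and invokes the fact that such a linkage with all joints actually moving must have all axes identical, which translates back to parallelism of the rotation axes of $L$. The only difference is that the paper asserts this spherical degeneracy fact without proof, whereas you sketch a correct quaternion verification of it (including the repeated-axis patterns); that is added detail along the same route, not a different approach.
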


\begin{proof}
Let $r\ge 2$ be the number of P-joints. Then $L'$ is a spherical linkage with $5-r\le 3$ joints. Such a linkage
is necessarily degenerate: if all three (or fewer) joints are actually moving, then all axes are identical.
\end{proof}

In order to classify PRRRR linkages, we use bond theory. 
Here are some additional facts for bonds in the presence of  P-joints.

\begin{lem} \label{lem:1}
Assume that $j_i$ is a P-joint, and $j_{i+1}$ and $j_{i+2}$ are R-joints.
\begin{enumerate}[(a)]
\item The joints $j_i$ and $j_{i+1}$ cannot be connected by a bond.
\item If the joints $j_i$ and $j_{i+2}$ are connected by a bond, then the axes $h_{i+1}$ and $h_{i+2}$ are parallel.
\end{enumerate}
\end{lem}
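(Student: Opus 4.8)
\emph{Plan.} The plan is to evaluate the relevant subwords of the closure product directly at the bond $\beta$, exploiting that attachment pins down the joint parameters. Working in the complexified dual quaternions, I recall that a P-joint $j_i$ is attached exactly when $s_i=\infty$, so in the projective closure $m_i(\beta)$ is proportional to $\eps p_i$, while an R-joint $j_k$ is attached exactly when $t_k=\pm\ci$, so $m_k(\beta)=\pm\ci-h_k$ with $h_k=p_k+\eps q_k$. The two algebraic facts I would lean on are: a real pure unit quaternion $p$ (with $p^2=-1$) has $\norm(p)=1$ and is therefore invertible, whereas $\norm(\pm\ci-p)=(\pm\ci)^2-p^2=0$, so $\pm\ci-p$ is a zero divisor and can never itself vanish (a scalar cannot equal a nonzero pure quaternion).

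For part (a) I would argue by contradiction. If $\beta$ connects $j_i$ and $j_{i+1}$, then in particular $m_i(\beta)m_{i+1}(\beta)=0$. Substituting $m_i(\beta)\propto\eps p_i$ and $m_{i+1}(\beta)=\pm\ci-p_{i+1}-\eps q_{i+1}$ and discarding the $\eps^2$-term, this product equals $\eps\,p_i(\pm\ci-p_{i+1})$. Since $p_i$ is invertible, vanishing would force $\pm\ci-p_{i+1}=0$, i.e. the pure quaternion $p_{i+1}$ equal to the scalar $\pm\ci$, which is impossible; hence no such bond exists.

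For part (b) I would use the connection condition $m_i(\beta)m_{i+1}(\beta)m_{i+2}(\beta)=0$. Substituting $m_i(\beta)\propto\eps p_i$ and $m_{i+2}(\beta)=\epsilon\ci-p_{i+2}-\eps q_{i+2}$ (with $\epsilon=\pm1$) and dropping $\eps^2$-terms, the product reduces to $\eps\,p_i\,(t_{i+1}-p_{i+1})(\epsilon\ci-p_{i+2})$. Cancelling the invertible $p_i$ leaves
\[ (t_{i+1}-p_{i+1})(\epsilon\ci-p_{i+2})=0. \]
If $t_{i+1}\neq\pm\ci$ then $\norm(t_{i+1}-p_{i+1})=t_{i+1}^2+1\neq0$, so $t_{i+1}-p_{i+1}$ is invertible and would force $\epsilon\ci-p_{i+2}=0$, impossible as before; thus necessarily $t_{i+1}=\epsilon'\ci$ with $\epsilon'=\pm1$. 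The concluding step is to read off parallelism from the scalar part of $(\epsilon'\ci-p_{i+1})(\epsilon\ci-p_{i+2})$: expanding via $p_{i+1}p_{i+2}=-\langle p_{i+1},p_{i+2}\rangle+p_{i+1}\times p_{i+2}$, the scalar part is $-\epsilon'\epsilon-\langle p_{i+1},p_{i+2}\rangle$, and setting it to zero gives $\langle p_{i+1},p_{i+2}\rangle=-\epsilon'\epsilon=\pm1$. Since $p_{i+1},p_{i+2}$ are unit vectors this means $p_{i+1}=\pm p_{i+2}$, i.e. the axes $h_{i+1}$ and $h_{i+2}$ are parallel.

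The routine parts are the $\eps^2\to0$ truncations and the sign bookkeeping in $\epsilon,\epsilon'$. The one genuinely delicate point is the intermediate deduction in (b) that the \emph{middle} R-joint parameter must itself satisfy $t_{i+1}=\pm\ci$: here one must use that $\epsilon\ci-p_{i+2}$ is a zero divisor, so the factored equation carries no information unless $t_{i+1}-p_{i+1}$ is non-invertible as well. This is the hinge that converts the vanishing product into the clean scalar identity yielding parallelism.
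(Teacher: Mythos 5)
Correct, and essentially the paper's own argument: evaluate the relevant subword at the bond, pass to the dual part, cancel the invertible $p_i$, and analyze the quaternion equation $(t-p_{i+1})(t'-p_{i+2})=0$ --- you simply spell out the two steps the paper compresses into one sentence, namely that $t_{i+1}$ must equal $\pm\ci$ by invertibility of $t_{i+1}-p_{i+1}$ otherwise, and the scalar-part computation forcing $p_{i+1}=\pm p_{i+2}$. One minor notational caveat: in this paper the dual unit macro is typeset as $\epsilon$, so your sign variable $\epsilon$ collides visually with it; rename the sign (e.g.\ to $\sigma$) if this text is to sit alongside the paper's conventions.
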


\begin{proof}
\emph{(a)} Assume, without loss of generality, that $(\infty,t)$ are the coordinates at $(j_i,j_{i+1})$ of the bond connecting $j_i$ and $j_{i+1}$. 
Then we have that $\eps p_i(t-h_{i+1})=0$, which is impossible.

\emph{(b)} If $(\infty,t,t')$ are the coordinates at $(j_i,j_{i+1},j_{i+2})$ of the bond connecting $j_i$ and $j_{i+1}$, then 
$\eps p_i(t-h_{i+1})(t'-h_{i+2})=0$. We pass to the dual part. Since $p_i$ is invertible,
we conclude that $(t-p_{i+1})(t'-p_{i+2})=0$. This is only possible if $t^2+1=t'^2+1=0$ and $p_{i+1}=\pm p_{i+2}$.
\end{proof}

\begin{lem} \label{lem:p1}
If $j_1$ is a P-joint and all other joints are R-joints, then $h_2||h_3$ and $h_4||h_5$.
\end{lem}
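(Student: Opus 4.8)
The plan is to read both parallelisms off the bond diagram of $L$, using the facts (i)--(vi) recalled above together with Lemma~\ref{lem:1}. First I would record which connections can occur. Since we assume that every joint actually moves, fact~(iii) gives that each joint is attached to a bond, and then fact~(ii) gives that each joint is connected to at least one other joint. By fact~(iv) no two consecutive R-joints are connected, and by Lemma~\ref{lem:1}(a) the P-joint $j_1$ is connected to neither of its R-neighbours $j_2$, $j_5$. In a $5$-cycle every pair of joints has cyclic distance $1$ or $2$, so the only connections that can arise are the five ``distance-$2$'' pairs
\[ (j_1,j_3),\quad (j_2,j_4),\quad (j_3,j_5),\quad (j_4,j_1),\quad (j_5,j_2). \]
In particular $j_1$, which must be connected to something, is connected to $j_3$ or to $j_4$.

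Next I would exploit the reflection $j_2\leftrightarrow j_5$, $j_3\leftrightarrow j_4$ of the cyclic order, which is a symmetry of the whole set-up and which interchanges the two target statements $h_2||h_3$ and $h_4||h_5$ as well as the two possibilities for $j_1$. Hence it suffices to treat the case $j_1$ connected to $j_3$ and to show that in this single case \emph{both} parallelisms already follow. The connection $(j_1,j_3)$ gives $h_2||h_3$ immediately by Lemma~\ref{lem:1}(b). If $j_1$ is moreover connected to $j_4$, the mirror form of Lemma~\ref{lem:1}(b) gives $h_4||h_5$ and we are done; so assume it is not. Then $j_4$, whose only remaining option is $j_2$, is connected to $j_2$, and fact~(v) applied to the R-joints $j_2,j_3,j_4$ (connected $j_2$-to-$j_4$, with $h_2||h_3$) yields $h_3||h_4$, so that $h_2||h_3||h_4$.

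Finally $j_5$ must be connected, its only options being $j_3$ or $j_2$. If $j_5$ is connected to $j_3$, then fact~(v) applied to $j_3,j_4,j_5$ (using $h_3||h_4$) gives $h_4||h_5$, finishing this branch. The one remaining possibility, $j_5$ connected to $j_2$, is a connection running \emph{across} the P-joint $j_1$, and this is the step I expect to be the main obstacle, since it is not covered by the facts (i)--(vi) nor by Lemma~\ref{lem:1} as stated. Here I would argue just as in the proof of Lemma~\ref{lem:1}(b): with bond coordinates $(t_5,\infty,t_2)$ at $(j_5,j_1,j_2)$ the bond equation reads $(t_5-h_5)(1-\eps s_1 p_1)(t_2-h_2)=0$, and taking primal parts—where the P-factor contributes only its primal part $1$—leaves $(t_5-p_5)(t_2-p_2)=0$. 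Exactly as in Lemma~\ref{lem:1}(b), a vanishing product of two such null factors forces $t_5^2+1=t_2^2+1=0$ and $p_5=\pm p_2$, i.e.\ $h_2||h_5$; combined with $h_2||h_4$ this again gives $h_4||h_5$. Thus in every branch $h_2||h_3$ and $h_4||h_5$, which is the claim. The only genuinely new computation needed is this ``R--P--R across the prismatic joint'' case, so I would either record it as a short auxiliary claim or fold it into the statement of Lemma~\ref{lem:1}.
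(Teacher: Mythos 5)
Your first half reproduces the paper's argument exactly: $j_1$ must be connected to some joint, Lemma~\ref{lem:1}(a) excludes $j_2$ and $j_5$, so up to the reflection symmetry $j_1$ is connected to $j_3$, and Lemma~\ref{lem:1}(b) gives $h_2||h_3$. From there the paper does \emph{not} continue with bonds: it passes to the spherical projection, which is a $4$-linkage with two equal axes $p_2=\pm p_3$, hence degenerate, and since all its joints move the other two axes must be equal as well, giving $h_4||h_5$. Your continuation by pure bond combinatorics is a genuinely different route; your Case~A and sub-case~B1 are fine (fact~(v) applies as you use it).

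The gap is in sub-case~B2, exactly the step you flag as the main obstacle. Your displayed equation is internally inconsistent: you write the coordinate $\infty$ at $j_1$ but keep the motion in the form $m_1=1-\eps s_1p_1$. These are two different situations, and only one of them supports your computation. If the bond is \emph{not} attached to $j_1$, i.e.\ $s_1$ is finite, then the primal part of $m_5m_1m_2=0$ is indeed $(t_5-p_5)(t_2-p_2)=0$ and $p_5=\pm p_2$ follows as in Lemma~\ref{lem:1}(b). But if the bond \emph{is} attached to $j_1$, i.e.\ $s_1=\infty$, then $m_1$ degenerates projectively to $\eps p_1$, the primal part of the bond equation is identically zero and carries no information, and the dual part reads $(t_5-p_5)\,p_1\,(t_2-p_2)=0$. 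This does \emph{not} force $p_5=\pm p_2$: in the complexified quaternions, identified with $2\times 2$ matrices, the two outer factors have rank one and the equation only says that the invertible matrix $p_1$ maps the image of $(t_2-p_2)$ into the kernel of $(t_5-p_5)$; this is a condition on $p_1$ relative to $p_2,p_5$, and one can construct real unit pure quaternions $p_1,p_2,p_5$ with $p_2\neq\pm p_5$ satisfying it. (Lemma~\ref{lem:1}(b) works precisely because there the degenerate factor $\eps p_i$ sits on the outside of the product and can be cancelled after passing to the dual part; here it sits in the middle.) Nothing in your set-up excludes $s_1=\infty$: such a bond is attached to $j_1$ and then connects $j_1$ to $j_3$, which is compatible with all of your Case~B assumptions. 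So either you rule out this possibility by a further argument --- which does not seem to follow from the facts and lemmas as stated --- or you need a different device for this case; the paper's spherical-projection step is exactly what avoids the problem.
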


\begin{proof}
Assume that $j_1$ is the P-joint. It must be connected by a bond to at least one other joint.
By Lemma~\ref{lem:1}, this cannot be $j_2$ and $j_5$, so we may assume it is connected to $j_3$.
By Lemma~\ref{lem:1} again, $h_2$ and $h_3$ are parallel. Then the spherical projection $L'$ is a 
4-linkage with 2 equal axes $p_2$ and $p_3$. Hence $L'$ is degenerate, and the other two axes $p_4$ and
$p_5$ are equal too. 
\end{proof}

Note that Lemma~\ref{lem:p1} holds for all $j_i$ with appropriate indices.

\begin{thm} \label{thm:p}
Let $L$ be a 5-linkage with at least one P-joint and all other joints of type R.
Then the following two cases are possible.
\begin{enumerate}
%\item Up to cyclic shift, $j_1$ is the only P-joint, $h_2||h_3$, $h_4||h_5$, and $p_1,p_2,p_5$ are linearly dependent;
\item Up to cyclic shift, $j_1$ is the only P-joint, $h_2||h_3$, and $h_4||h_5$;
	$L$ has mobility~1, and $t_2=\pm t_3$ and $t_4=\pm t_5$ is fulfilled on the configuration curve.
\item All axes of R-joints are parallel.
\end{enumerate}
\end{thm}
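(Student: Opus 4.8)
The plan is to run a case analysis on the number of P-joints. If $L$ has two or more P-joints, then Lemma~\ref{lem:3p} immediately puts us in case (2), since it forces all rotation axes to be parallel. So assume from now on that there is exactly one P-joint; after a cyclic shift we may take it to be $j_1$, so that $j_2,\dots,j_5$ are R-joints with axes $h_2,\dots,h_5$. Lemma~\ref{lem:p1} then yields $h_2\|h_3$ and $h_4\|h_5$ (the P-joint cannot be bonded to a neighbour, so it is bonded to $j_3$ or $j_4$, and each alternative gives one parallel pair). If in addition the two pairs are parallel to one another, then all four R-axes are parallel and we are again in case (2). It remains to treat the case $h_2\|h_3$, $h_4\|h_5$ with the directions $p_2$ and $p_4$ \emph{not} parallel, and I claim this is exactly case (1).

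First I would establish the relations $t_2=\pm t_3$ and $t_4=\pm t_5$ from the \emph{primal} part of the closure equation. Reducing Equation~(\ref{eq:clos}) modulo $\eps$ kills the P-joint ($m_1\equiv 1$) and leaves the spherical closure
\[(t_2-p_2)(t_3-p_3)(t_4-p_4)(t_5-p_5)\equiv 1.\]
Parallelism gives $p_3=\pm p_2$ and $p_5=\pm p_4$, so each consecutive pair multiplies out to a quaternion of the form $a+b\,p_2$, respectively $c+d\,p_4$, with $a,b,c,d$ scalars in the $t_k$. Using $p_2 p_4=-\langle p_2,p_4\rangle+p_2\times p_4$, the imaginary part of $(a+b\,p_2)(c+d\,p_4)$ is $ad\,p_4+bc\,p_2+bd\,(p_2\times p_4)$. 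Since $p_2\not\parallel p_4$, the three pure quaternions $p_2,p_4,p_2\times p_4$ are linearly independent, so its vanishing forces the mixed coefficients $b$ and $d$ to vanish, which is precisely $t_2=\pm t_3$ and $t_4=\pm t_5$ (with signs matching those in $p_3=\pm p_2$, $p_5=\pm p_4$). As primal closure holds at every point of $K$, these relations hold identically on the configuration set.

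Having the relations in hand, I would read off the mobility by passing to the \emph{dual} part. On the relation locus each consecutive R-pair becomes a rotation by opposite angles about parallel axes, so $m_2m_3$ and $m_4m_5$ each reduce projectively to a \emph{pure translation}, and $m_1$ is a translation as well. The closure equation therefore collapses to a single vector equation $\vec T_1+\vec T_{23}+\vec T_{45}=0$ in $\R^3$, where $\vec T_1=s_1 p_1$ sweeps a fixed line, $\vec T_{23}(t_2)$ sweeps a circle in the plane perpendicular to $p_2$, and $\vec T_{45}(t_4)$ sweeps a circle in the plane perpendicular to $p_4$. The component along $p_1$ determines $s_1$ linearly, while the two components transverse to $p_1$ impose two conditions relating $t_2$ and $t_4$. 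Since $L$ is mobile these cannot both be non-trivial, so the solution set is a curve and the mobility is exactly $1$.

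The step I expect to be the main obstacle is this last one — pinning the mobility down to exactly $1$ rather than merely $\le 2$, i.e.\ ruling out mobility $2$. The delicate point is to exclude the possibility that the two transverse conditions are \emph{both} vacuous. This is where one uses that the translation loci $\vec T_{23}(t_2)$ and $\vec T_{45}(t_4)$ are genuine circles, not contained in any line: if both conditions were vacuous, then $\vec T_{23}(t_2)+\vec T_{45}(t_4)$ would lie in $\mathrm{span}(p_1)$ for all $t_2,t_4$, which is incompatible with each summand tracing a full circle in a $2$-plane transverse to its axis (the two planes being distinct because $p_2\not\parallel p_4$). Everything else is either a citation of Lemmas~\ref{lem:3p} and~\ref{lem:p1} or a short quaternion computation.
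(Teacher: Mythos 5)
Your proposal is correct and follows the same backbone as the paper's proof: dispose of the multi-P case via Lemma~\ref{lem:3p}, invoke Lemma~\ref{lem:p1} to get $h_2||h_3$ and $h_4||h_5$, split on whether the two parallel pairs share a direction, and then extract $t_2=\pm t_3$, $t_4=\pm t_5$ from the primal (spherical) part of the closure equation --- your explicit quaternion expansion in the basis $p_2,p_4,p_2\times p_4$ is just a computational rendering of the paper's observation that a rotation about direction $p_2$ can equal a rotation about a non-parallel direction $p_4$ only if both are the identity. Where you go beyond the paper: its proof stops after deriving the $t$-relations and never actually argues that the mobility is exactly $1$, whereas you supply this via the dual part, reducing the closure to the vector equation $s_1p_1+\vec T_{23}(t_2)+\vec T_{45}(t_4)=0$ and ruling out mobility $2$ by noting that the tangent directions of the nondegenerate translation circle $\vec T_{23}$ (nondegenerate precisely because $L$ is assumed nondegenerate, so $h_2\ne h_3$) cannot all lie in $\mathrm{span}(p_1)$. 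That last step is a genuine addition rather than a different route, and it closes a gap the paper leaves to the reader; the only point worth tightening is the passage from $ad=bc=bd=0$ to $b=d=0$, where you should note that neither factor $a+bp_2$ nor $c+dp_4$ can vanish, since the full primal product must be a nonzero scalar.
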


\begin{proof}
Using Lemma~\ref{lem:p1}, we get $h_2||h_3$, $h_4||h_5$. Without loss of generality, we assume $p_2=p_3$ and $p_4=p_5$;
if this is not true, it can be easily achieved by replacing $h_3$ by $-h_3$ or $h_5$ by $-h_5$.
Either all axes of R-joints are parallel or the axes of $h_3$ 
and $h_4$ are not parallel. In the second case, there is nothing left to show; let us assume that $h_3$
and $h_4$ are not parallel. 
The primal part of the closure equation is equivalent to the equality of the two rotations
 $$(t_2-p_2)(t_3-p_2) \equiv (t_5+p_5)(t_4+p_4).$$
Since the axes are distinct, both rotations must be the identity, which implies $t_2=-t_3$ and $t_4=-t_5$.
%
% For proving the linear relationship, we need to make it clear that there is a bond connection from $j_2$ to $j_5$. Otherwise, 
% there is no connection begins from $j_2$. Because there is no connection happens from $j_2$ to $j_1$ or $j_3$ by Lemma~\ref{lem:1}.
% As $h_2||h_3$, there is no connection happens from $j_2$ to $j_4$. Otherwise, we have $l_{234}=6$ which contradicts the 
% parallelism of $h_2$ and $h_3$. Then we have the equation from the bond which connects $j_2$ to $j_5$
% $$(i-p_5)p_1(i-p_2) \equiv 0.$$
% This gives one equality $p_5p_1 \equiv -p_1p_2$.  Then we have $p_5\times p_1 = p_2\times p_1$ which means 
% $p_1,p_2,p_5$ are linearly dependent.
\end{proof}

\section{Construction of Linkages with Helical Joints}\label{construction-helical}

In this section we give a construction that produces mobile linkages with H-joints from linkages
with C-, P-, and R-joints. We illustrate the construction by several well-known examples and one
example which is new.

We start with a simple construction: take a linkage with $r$ C-joints that has
mobility at least $r+1$. For each C-joint $j_k$, impose the additional restriction $t_k=\cot(\frac{s_k}{2g_k})$
on its joint parameters $(s_k,t_k)$, where $g_k$ is a nonzero real constant. Any additional equation reduces
the mobility at most by 1, so we get a mobile linkage where every C-joint $j_k$ 
is replaced by an H-joint with pitch~$g_k$.

We can extend this simple construction using the observation that $\Q$-linear relations between the angles imply
algebraic relations between their tangents. For the general construction, which we call {\em screw carving},
we need the following ingredients.

\begin{enumerate}
\item a linkage $L$ with $m$ C-joints $j_{k_1},\dots,j_{k_m}$ and an undetermined number of R- and P-joints;
\item an irreducible analytic subspace $K_0$ of the configuration space of $L$;
\item an integer matrix $A$ with $m$ columns that annihilates the vector of analytic functions
	$(\alpha_{k_1},\dots,\alpha_{k_m})^t\in\C(K_0)^m$ such that $\cot(\frac{\alpha_k}{2})=t_k$;
\item an $m$-tuple $(g_{k_1},\dots,g_{k_m})$ of nonzero real numbers, so that $A$ also annihilates
	the vector of functions $(a_{k_1},\dots,a_{k_m})^t$, where $a_k:K_0\to\C$ is the function
	$(s_\ast,t_\ast)\mapsto \frac{s_k}{g_k}$.
\end{enumerate}

As before, the linkage $L'$ with H-joints instead of C-joints is obtained by imposing the additional restriction
$t_k=\cot(\frac{s_k}{2g_k})$ on its joint parameters $(s_k,t_k)$, for each C-joint $t_k$. To obtain linkages
with large mobility, the integer matrix $A$ should have the largest possible rank, which means that
all integral relations between the analytic angle functions are linear combination of matrix rows.
(In the next section, we will indeed always choose such matrices of maximal rank.) The empty
matrix with zero rows is allowed, then we just get the simple construction above.

\begin{lem} \label{lem:carving}
Let $d:=\dim(K_0)$ and $\ell:=\rank(A)$. Then the mobility of the linkage produced by screw carving is at least $d-m+\ell$.
\end{lem}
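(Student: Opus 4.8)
The plan is to reinterpret the transcendental carving conditions $t_k=\cot(\tfrac{s_k}{2g_k})$ as the pullback, under a single holomorphic map to an algebraic torus, of the identity element, and then to read off the codimension from the rank of $A$. First I would rewrite each condition in exponential form. From $\cot(\alpha_k/2)=t_k$ one solves for the single-valued rational expression $e^{\ci\alpha_k}=\tfrac{t_k+\ci}{t_k-\ci}$, and the identical computation applied to the angle $s_k/(2g_k)=a_k/2$ gives
\[ t_k=\cot\Big(\tfrac{s_k}{2g_k}\Big)\iff \frac{t_k+\ci}{t_k-\ci}=e^{\ci s_k/g_k}. \]
Hence, setting
\[ \Psi_k:=\frac{t_k+\ci}{t_k-\ci}\,e^{-\ci s_k/g_k}, \]
a single-valued holomorphic function on $K_0$ (no branch of $\arccot$ is needed), the carved locus is exactly $\{x\in K_0 : \Psi_{k_1}(x)=\dots=\Psi_{k_m}(x)=1\}$.

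Next I would exploit the integrality of $A$. Writing $\phi_k=\alpha_k-a_k$, so that $\Psi_k=e^{\ci\phi_k}$, the hypotheses say that $A$ annihilates both $(\alpha_{k_i})_i$ and $(a_{k_i})_i$ in $\C(K_0)$, hence annihilates $(\phi_{k_i})_i$; thus for every row $r_j=(A_{j1},\dots,A_{jm})$ of $A$ the combination $\sum_i A_{ji}\phi_{k_i}$ vanishes identically on $K_0$. Since the $A_{ji}$ are \emph{integers}, the single-valued function $\prod_i \Psi_{k_i}^{A_{ji}}=\exp\big(\ci\sum_i A_{ji}\phi_{k_i}\big)$ is therefore identically $1$. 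Consequently $\Psi=(\Psi_{k_1},\dots,\Psi_{k_m})\colon K_0\to(\C^*)^m$ lands inside the algebraic subgroup
\[ T=\Big\{z\in(\C^*)^m : \textstyle\prod_i z_i^{A_{ji}}=1 \text{ for every row } j \text{ of } A\Big\}, \]
whose dimension is $m-\rank(A)=m-\ell$.

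Then I would convert this into a codimension statement. Being a group, $T$ is smooth, so its identity $\mathbf 1$ is, in a neighbourhood within $T$, the common zero set of $\dim T=m-\ell$ holomorphic functions (local coordinates on $T$ vanishing at $\mathbf 1$). Pulling these back along $\Psi$, the carved locus $\Psi^{-1}(\mathbf 1)$ is, locally on $K_0$, cut out by only $m-\ell$ holomorphic equations rather than the apparent $m$. Since $K_0$ is irreducible of dimension $d$, the analytic analogue of Krull's principal ideal theorem (each equation drops the dimension by at most one) shows that every nonempty component of $\Psi^{-1}(\mathbf 1)$ has dimension at least $d-(m-\ell)=d-m+\ell$. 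For nonemptiness I would invoke the initial configuration, where every $t_k=\infty$ and $s_k=0$ forces $\Psi_k=1$; assuming, as in the setup, that $K_0$ passes through it, the carved locus is nonempty and the claimed bound on the mobility follows.

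The genuinely non-formal input is the passage from the transcendental conditions to an \emph{algebraic} subgroup $T$ of the correct dimension, which is precisely where the integrality of $A$ is essential: it is what turns the $\ell$ additive relations among the multivalued angle functions into $\ell$ Laurent-monomial relations satisfied identically by the single-valued $\Psi_{k_i}$, thereby lowering the apparent codimension from $m$ to $m-\ell$. Once this is in place, the exponential rewriting, the smoothness of $T$, and the dimension count are all routine.
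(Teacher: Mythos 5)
Your proof is correct, and although it rests on the same basic principle as the paper's --- use the relations recorded in $A$ to replace the $m$ carving equations by only $m-\ell$ holomorphic equations, then apply the fact that each holomorphic equation drops the dimension of an irreducible analytic set by at most one --- your mechanism for that reduction is genuinely different. The paper works additively: it chooses $m-\ell$ of the angle functions generating the $\Q$-span of all of them, writes each remaining $\alpha_k$ (and, by hypothesis, each $s_k/g_k$) as the corresponding $\Q$-linear combination, and claims that the carving equations at the generating joints imply the rest. You work multiplicatively: the single-valued functions $\Psi_k=e^{\ci(\alpha_k-s_k/g_k)}$ satisfy the monomial identities $\prod_i\Psi_{k_i}^{A_{ji}}=1$ precisely because the entries of $A$ are integers, so $\Psi$ maps $K_0$ into an algebraic subgroup $T\subset(\C^*)^m$ of dimension $m-\ell$, whose identity is locally cut out inside $T$ by $m-\ell$ coordinates. (Incidentally, your $\Psi_k$ is essentially the same exponential identification the paper uses later in the proof of Theorem~\ref{thm:ax}.) What your route buys is immunity to the mod-$2\pi$ ambiguity that the paper's additive step glosses over: the carving condition only forces $\alpha_{k_i}\equiv s_{k_i}/g_{k_i}\pmod{2\pi}$, so when the rational coefficients are not integers the implication claimed in the paper can fail on some connected components (for instance, if $\alpha_k=\tfrac12\alpha_{k_1}$ and $s_k/g_k=\tfrac12 s_{k_1}/g_{k_1}$, then shifting $\alpha_{k_1}-s_{k_1}/g_{k_1}$ by $2\pi$ shifts $\alpha_k-s_k/g_k$ only by $\pi$); what is actually true there is that the fully carved locus is a union of connected components of the partially carved one, which still yields the dimension bound. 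Your subgroup $T$ encodes the $\Z$-module of relations rather than merely its $\Q$-span, so no such component issue arises at all. Both arguments share one tacit caveat, which you at least make explicit: the codimension estimate says nothing if the carved locus is empty, so one needs a point of $K_0$ satisfying the carving conditions --- the initial configuration serves whenever $K_0$ contains it (as it does in all of the paper's examples, and by hypothesis in Theorem~\ref{thm:ax}), even though the bare screw-carving setup does not formally require this.
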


\begin{proof}
The subset $K'$ of $K_0$ that satisfies the additional restrictions $t_k=\cot(\frac{s_k}{2g_k})$ is contained
in the configuration space of $L'$. Since the codimension of an analytic subset is never bigger than the number
of defining equations, we see that $\dim(K')\ge d-m$. We claim that $K'$ can be defined (as a subset of $K'$)
by only $m-\ell$ equations.

Let $\alpha_{k_1},\dots,\alpha_{k_m}\in\C(K_0)$ be as above. The $\Q$-vector space generated by these $m$
functions has dimension at most $m-\ell$. Without loss of generality, we assume that 
$\{\alpha_{k_1},\dots,\alpha_{k_{m-\ell}}\}$ is a generating set. Any other $\alpha_k$ can be expressed as
a $\Q$-linear combination $$\alpha_k=q_1\alpha_{k_1}+\dots+q_{k_{m-\ell}}\alpha_{k_{m-\ell}},$$
with rational coefficients depending on the matrix $A$. But then we also have 
$$\frac{s_k}{g_k}=q_1\frac{s_{k_1}}{g_{k_1}}+\dots+q_{k_{m-\ell}}\frac{s_{k_{m-\ell}}}{g_{k_{m-\ell}}}.$$
It follows that the equations 
$t_{k_1}=\cot(\frac{s_{k_1}}{2g_{k_1}}),\dots,t_{k_{m-\ell}}=\cot(\frac{s_{k_{m-\ell}}}{2g_{k_{m-\ell}}})$
imply all other equations.
\end{proof}

\begin{example}
Let $L$ be a 4-linkage with 4 cylindrical joints with parallel axes. Its mobility is 4.
For all configurations
$(t_1=\cot(\frac{\alpha_1}{2}),s_1,\dots,t_4=\cot(\frac{\alpha_4}{2}),s_4)$, we have 
$\alpha_1+\alpha_2+\alpha_3+\alpha_4=0$ and $s_1+s_2+s_3+s_4=0$. So we take $K_0$ as the
full configuration set, $A$ as the $1\times 4$ matrix $(1,1,1,1)$, and $g_1=g_2=g_3=g_4$,
and apply screw carving. We obtain a 4-linkage with 4 helical joints and mobility 4-4+1=1.

Similarly, one can obtain an $n$-linkage with $n$ H-joints with parallel axes with mobility $n-3$, $n\geq 4$.
\end{example}

\begin{example} \label{ex:hhrrr}
Here is a variation of the previous example. Set
\[ h_1=\qk-\eps\qi, h_2=\qk+\eps\qi, h_3=h_5=\qk, h_4=\qk+2\eps\qj \]
and let $L$ be the CCRRR linkage with C-joint axes $h_1,h_2$ and R-joint axes $h_3,h_4,h_5$.
Its mobility is 3, and all configurations satisfy $s_1+s_2=0$. We define $K_0$ as the subvariety
defined by $\tan(17\arccot(t_1)-11\arccot(t_2))=0$ (this is a rational function in $t_1,t_2$).
Its dimension is 2. We set as the $1\times 2$ matrix $A=(1,1)$ and 
$g_1=\frac{1}{17}$, $g_2=\frac{-1}{11}$. By screw carving we get an HHRRR linkage with mobility 1.
Figure~\ref{fig:hhrrr} shows the trace of the joint $j_4$ when the link with the two H-joints $j_1,j_2$ is fixed.
\end{example}

\begin{figure}[htb]
  \centering
  \includegraphics[width=8cm]{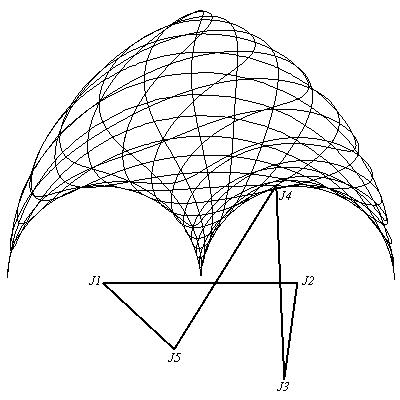}
  \caption{Planar projection of an HHRRR linkage with 5 parallel axes to the plane orthogonal to the axes
	(Example~\ref{ex:hhrrr}). The helical joints are at $j_1$ and $j_2$.
	The ratio of the pitches at the two helical joints $j_1,j_2$ is 11:17. The curve shown is the trace
	of the joint $j_4$. It is an algebraic curve of large degree.}
  \label{fig:hhrrr}
\end{figure}

\begin{example}
Let $h_1,h_2,h_3$ be lines. Reflecting them by the coordinate axes represented by $\qi$, we get
$h_4=\qi h_1\qi$, $h_5=\qi h_2\qi$, $h_6=\qi h_3\qi$. Let $L$ be the 6C-linkage with axes $h_1,\dots,h_6$.
The zero set of the closure equation
\[ (t_1-h_1)(1-\eps s_1h_1)\cdots(t_6-h_6)(1-\eps s_6h_6) \equiv 1 \]
has a component of dimension~4, given by the equations 
\[ t_1=t_4, t_2=t_5, t_3=t_6, s_1=s_4, s_2=s_5, s_3=s_6, x\qi+\qi x=0,\]
\[ \mbox{where } x=(t_1-h_1)(1-\eps s_1h_1)(t_2-h_2)(1-\eps s_2h_2)(t_3-h_3)(1-\eps s_3h_3) . \]
With $A=\begin{pmatrix} 1 & -1 & 0 & 0 & 0 & 0 \\ 0 & 0 & 1 & -1 & 0 & 0 \\ 0 & 0 & 0 & 0 & 1 & -1 \end{pmatrix}$
and $g_1=g_4$, $g_2=g_5$, $g_3=g_6$, the screw carving procedure gives a line symmetric 6H linkage
with mobility~1.

Similarly, one can construct a plane symmetric RHHRHH linkage with mobility~1. Both linkages are well-known,
see \cite{Baker:97}.
\end{example}

\begin{example} \label{example:new}
Let $h_1$, $h_2$, $h_3$ be lines with linear independent primal parts that do not intersect pairwise,
such that $o(h_1,h_2,h_3)=o(h_3,h_1,h_2)=0$. Let $L$ be the RRCRRC linkage with axes $h_1,h_2,h_3,h_2,h_1,h_3$.
The zero set of the closure equation
\[ (t_1-h_1)(t_2-h_2)(t_3-h_3)(1-\eps s_1h_3)(t_4-h_2)(t_5-h_1)(t_6-h_3)(1-\eps s_6h_3) \equiv 1 \]
has two components of dimension~2. The first is given by $t_1=-t_5$, $t_2=-t_4$, $t_3=t_6=\infty$, $s_3=s_6=0$;
this is a degenerate motion which does not separate the pairs of axes at joints $(j_1,j_5)$ and at $(j_2,j_4)$.
The equations of the second component $K_0$ can be computed by computer algebra. Two of them are $s_3=s_6$
and $t_3=t_6$; the remaining are more complicated. 
With $A=\begin{pmatrix} 1 & -1 \end{pmatrix}$
and $g_3=g_6$, the screw carving procedure gives an RRHRRH linkage with mobility~1.
In contrast to all families of mobile 6-linkages with H-joints that have been known up to now, this linkage
has no parallel axes or apparent geometric symmetries. A distinctive property is the existence of a starting
position with three pairs of coinciding axes.
\end{example}

\section{Classification of 5-linkages with Helical Joints}\label{classification-helical}

Now we show that the construction in Section~\ref{construction-helical} is complete, 
i.e., every linkage with helical joints can
be obtained in this way. The main idea is the application of a theorem by Ax to separate the transcendental 
part and the algebraic part of the closure equation. Then, we use the completeness result to classify
5-linkages with P-, R-, and H-linkages.

Let $L$ be a mobile linkage with $m$ helical joints; we assume that it is not degenerate,
and that all joints actually move. 
There is a natural candidate for ingredients of the screw carving constructions in order to produce $L$.

\begin{enumerate}[(i)]
\item We define the {\em cylindrical extension} $L'$ of $L$ be replacing all H-joints by C-joints.
	The configuration set $K$ of $L$ can be naturally embedded in the configuration $K'$ set of $L'$.
\item We take $K_0$ as the Zariski closure of $K$ in $K'$, i.e., the subset of $K'$ defined by all algebraic
	equations that hold for $K$.
\item We take $A$ as an integer matrix whose rows generate, as a $\Z$-module, the coefficient vectors
	of all integral linear equations that hold for the $m$ $\alpha$-parameters in $K$.
\item The $m$ nonzero numbers are defined as the $m$ pitches of $K$.
\end{enumerate}

Application of the screw carving constructions just re-installs the screw conditions that already existed
in $L$. But our construction includes a prediction on the mobility, and it is not clear if the mobility
of $L'$ is big enough to explain the mobility of $L$.

Since the use of number theoretic theorems is not usual in kinematics, we include the full statement
of the Theorem of Ax on the transcendence degrees of a function field with exponential functions.

\begin{thm} \label{thm:axorig}
Let $q,n$ be positive integers.
Let $f_1,\dots,f_q$ be analytic functions in some neighbourhood of $\C^m$ about the origin $o$ for which
$f_1-f_1(o),\dots,f_q-f_q(o)$ are $\Q$-linearly independent. Let $r$ be the rank of the Jacobi-matrix 
$\frac{\partial(f_1,\dots,f_q)}{\partial(z_1,\dots,z_m)}$. Then the transcendence degree of
$\C(f_1,\dots,f_q,e^{f_1},\dots,e^{f_q})$ is greater than or equal to $q+r$.
\end{thm}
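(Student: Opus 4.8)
The plan is to strip away the analytic packaging and reduce the statement to a theorem in differential algebra, which is where Ax's original argument lives. First I would replace the ring of convergent power series near $o$ by an abstract field $F$ of characteristic zero equipped with $m$ commuting derivations $D_1,\dots,D_m$ (playing the role of $\partial/\partial z_1,\dots,\partial/\partial z_m$) whose common field of constants is $\C$. Under this dictionary the hypotheses become purely algebraic: the $f_j$ are elements of $F$, the exponentials $e^{f_j}$ become elements $g_j\in F^{\ast}$ satisfying the logarithmic-derivative relations $D_k g_j=(D_k f_j)\,g_j$ for all $k,j$; the assumption that $f_1-f_1(o),\dots,f_q-f_q(o)$ are $\Q$-linearly independent becomes $\Q$-linear independence of the $f_j$ modulo $\C$; and $r=\rank\bigl(D_k f_j\bigr)$ is the rank of the $m\times q$ matrix over $F$. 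The target is then the algebraic Ax--Schanuel inequality
\[
\operatorname{trdeg}_{\C}\C(f_1,\dots,f_q,g_1,\dots,g_q)\ \ge\ q+r .
\]

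The main engine I would use is the module of K\"ahler differentials $\Omega_{F/\C}$ together with its universal derivation $d$, recalling that for a finitely generated subextension the transcendence degree over $\C$ equals the $F$-dimension of the span of the differentials of a generating set. Each derivation $D_k$ induces an $F$-linear functional $\omega\mapsto \iota_{D_k}\omega$ on $\Omega_{F/\C}$, and $r$ is precisely the dimension of the image of $\{df_1,\dots,df_q\}$ under the joint contraction map $\Omega_{F/\C}\to F^{m}$. The crucial structural input is that the exponential relation is visible only after contracting: one has $\iota_{D_k}\bigl(dg_j-g_j\,df_j\bigr)=0$ for every $k$, so the logarithmic forms $dg_j/g_j$ and the forms $df_j$ agree on all the $D_k$ even though they need not coincide in $\Omega_{F/\C}$. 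I would then pass to the $\C$-subspace spanned by the $f_j$ and the logarithmic differentials $dg_j/g_j$ and track the exactness of the resulting closed forms.

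The heart of the proof is Ax's lemma asserting that, under the $\Q$-linear independence hypothesis, an appropriate family of $q+r$ of the differentials $df_j,\,dg_j$ remains $F$-linearly independent. This splits into two effects. The Jacobian rank $r$ contributes $r$ independent directions coming from the contractions $\iota_{D_k}df_j$, i.e.\ from the genuinely nonconstant behaviour of the $f_j$. Separately, the $\Q$-linear independence of the $f_j$ modulo constants must be converted into multiplicative independence of the $g_j$, so that no hidden monomial relation $\prod g_j^{m_j}\in\C$ (equivalently $\sum m_j f_j\in\C$) can collapse the transcendence degree; this is exactly the Schanuel-type phenomenon, and it is established by a logarithmic-derivative computation showing that any such relation would force a nontrivial integral dependence among the $f_j$ modulo $\C$, contradicting the hypothesis.

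The step I expect to be the main obstacle is this second effect: ruling out algebraic relations among the exponentials $g_1,\dots,g_q$ that are \emph{not} forced by additive relations among the $f_j$. Naive parameter counting only yields an inequality with $r$, not with $q+r$, and it is precisely the interaction between the additive structure (the $f_j$) and the multiplicative structure (the $g_j$) that makes the theorem lie far deeper than dimension counting. This is the content that Ax secures by his differential-algebraic independence argument, and for that core I would invoke \cite{ax} rather than reproduce it; the contribution of the reduction above is to make transparent how the analytic hypotheses of Theorem~\ref{thm:axorig} feed into that algebraic core.
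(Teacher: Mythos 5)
The paper's entire proof of this statement is the single line ``This is \cite{ax}, Corollary~2'': the theorem is quoted verbatim from Ax, so there is no argument in the paper to compare against, and your proposal, which also ultimately invokes \cite{ax} for the core independence lemma, rests on the same citation. The reduction you sketch --- germs of meromorphic functions as a characteristic-zero field $F$ with commuting derivations $\partial/\partial z_1,\dots,\partial/\partial z_m$ and constant field $\C$, exponentials encoded by the relations $D_k g_j=(D_k f_j)\,g_j$, and $\Q$-linear independence of $f_j-f_j(o)$ translated into $\Q$-linear independence of the $f_j$ modulo constants --- is correct, and it is precisely the dictionary by which Ax himself deduces the analytic Corollary~2 from his differential-algebraic theorem, so your write-up is best viewed as an unpacking of the citation rather than an alternative proof. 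Two small points of precision: $F$ should be the \emph{fraction field} of the ring of convergent power series (your ``ring'' is not a field), and the rank $r$ must be read as the generic rank of the matrix $\bigl(D_k f_j\bigr)$ over $F$, not its rank at the point $o$.
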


\begin{proof}
This is \cite{ax}, Corollary 2.
\end{proof}

\begin{thm} \label{thm:ax}
Let $L$ be a linkage with $m$ helical joints. Let $K$ be an irreducible component of its configuration space
containing the initial configuration as a nonsingular point. 
Let $K_0$ be the Zariski closure of $K$ in the cylindrical extension of $L$.
Let $A$ be the integral matrix defined by the $\Z$-relations between the helical joint parameters. Then
\[ \dim(K_0) = \dim(K) + m - \rank(A) . \]
Consequently, $L$ can be obtained by screw carving from its cylindrical extension.
\end{thm}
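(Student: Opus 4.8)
The plan is to establish the displayed identity by squeezing $\dim(K_0)$ between two matching bounds, and then to read off the final clause. The inequality $\dim(K_0)\le\dim(K)+m-\rank(A)$ is elementary and is essentially the content of Lemma~\ref{lem:carving}: the set $K$ arises from $K_0$ by imposing the $m$ analytic screw conditions $t_k=\cot(\frac{s_k}{2g_k})$, and the integral relations recorded by $A$ show (exactly as in the proof of that lemma) that only $m-\rank(A)$ of these are independent; since the initial configuration is a nonsingular point of $K$, cutting $K_0$ by $m-\rank(A)$ equations gives $\dim(K)\ge\dim(K_0)-\big(m-\rank(A)\big)$. The reverse inequality is the hard part and is where Ax's Theorem~\ref{thm:axorig} enters. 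Once equality is known, the screw carving construction with the prescribed ingredients reinstalls precisely the screw conditions of $L$ and, by Lemma~\ref{lem:carving}, yields a linkage of mobility at least $\dim(K_0)-m+\rank(A)=\dim(K)$, so it reproduces $L$.

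For the reverse inequality I would pass to function fields. Write $d:=\dim(K_0)$ and $\delta:=\dim(K)$. Since $K$ is Zariski-dense in $K_0$, restriction to $K$ is injective on rational functions, so $d$ equals the transcendence degree over $\C$ of the field generated on $K$ by the coordinate functions of the cylindrical extension. On $K$ the $m$ helical joints contribute $s_k=g_k\alpha_k$ and $t_k=\cot(\frac{\alpha_k}{2})$; because $\cot(\frac{\alpha}{2})$ and $e^{\ci\alpha}$ are related by a M\"obius transformation, this field equals $\C(\{\alpha_k\},\{e^{\ci\alpha_k}\},\vec u,\vec v)$, where $\vec u,\vec v$ collect the remaining R- and P-joint parameters. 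I then split the transcendence degree additively as $\operatorname{trdeg}_\C\C(\{\alpha_k\},\{e^{\ci\alpha_k}\})$ plus the transcendence degree of the further extension by $\vec u,\vec v$.

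Parametrising $K$ by $\delta$ analytic coordinates near the nonsingular initial configuration, I would bound the two summands separately. For the first, Ax's theorem applied to a $\Q$-basis of $\operatorname{span}_\Q\{\alpha_k\}$ — which has size $m-\rank(A)$, since by construction $A$ records exactly the integral relations among the $\alpha_k$ — gives the lower bound $\big(m-\rank(A)\big)+r$, where $r$ is the rank of the Jacobian of these angle functions, i.e. the rank of $\{d\alpha_k\}$ at the initial configuration. For the second summand I use the standard fact that the transcendence degree of an extension is at least the rank of the newly adjoined differentials modulo those of the base field. Here all differentials of the base field lie in $\operatorname{span}\{d\alpha_k\}$ (because $d e^{\ci\alpha_k}$ is a multiple of $d\alpha_k$), a space of dimension $r$, while the full collection $\{d\alpha_k,\,d\vec u,\,d\vec v\}$ of intrinsic joint-parameter differentials spans the $\delta$-dimensional cotangent space $T^*_oK$; hence the second summand is at least $\delta-r$. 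Adding the two estimates, the unknown rank $r$ cancels and yields $d\ge\delta+\big(m-\rank(A)\big)$, which completes the squeeze.

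The main obstacle is exactly this reverse inequality, and within it the delicate point is the interplay between the \emph{visible} differential rank $r$ of the angle functions and the \emph{hidden} transcendence $m-\rank(A)$ carried by their exponentials. Ax's theorem applied to the angles alone is insufficient, since it only sees $r\le\delta$ and misses the extra mobility supplied by moving R- and P-joints; a naive dimension count is likewise insufficient, since it cannot detect the exponential transcendence. It is precisely the cancellation of $r$ between the two summands that makes the bound sharp. The remaining care is routine but must be carried out: verifying the hypotheses of Ax (analyticity and $\Q$-linear independence of the chosen angle functions at the nonsingular initial configuration, and the identification of Ax's Jacobian rank with $\rank\{d\alpha_k\}$), and confirming that $A$ indeed captures all integral relations so that the chosen $\Q$-basis has the asserted size $m-\rank(A)$.
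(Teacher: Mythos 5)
Your strategy for the hard inequality is genuinely different from the paper's. The paper also gets $\dim(K_0)\le\dim(K)+m-\rank(A)$ from Lemma~\ref{lem:carving}, but for the reverse direction it leaves $K$ entirely: it projects onto the joint parameters of the helical/cylindrical joints, $\pi_H\colon K\to K_H\subset\C^m$ and $\pi_C\colon K_0\to K_C$ with $K_C$ the Zariski closure of $K_H$, applies Theorem~\ref{thm:axorig} to a local analytic parametrization of $K_H$ itself --- so that the Jacobian rank occurring in Ax's theorem is simply $\dim(K_H)$, and no partial rank ever appears --- to get $\dim(K_C)\ge\dim(K_H)+q$ with $q:=m-\rank(A)$, and then transfers this bound to $\dim(K_0)$ by upper semicontinuity of fibre dimension, using that $K_H$ is Zariski dense in $K_C$ and hence meets the locus of generic fibre dimension of $\pi_C$. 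You instead stay on $K$, split $\operatorname{trdeg}_\C$ of the function field of $K_0$ as a tower (helical part, then the remaining parameters), estimate the first part by Ax and the second by the differential criterion for transcendence degree, and let the rank $r$ cancel. This replaces the paper's algebro-geometric fibre-dimension argument by a purely field-theoretic one, which is a legitimate and arguably more self-contained route.

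However, there is one genuine flaw in your write-up: you compute the ranks \emph{at the initial configuration} ($r=\rank\{d\alpha_k\}$ at $o$, and ``$\{d\alpha_k,d\vec u,d\vec v\}$ spans $T^*_oK$''), whereas both of your estimates are only valid for the \emph{generic} rank, i.e.\ the rank over the field of meromorphic functions on a neighbourhood of $o$ in $K$. For the Ax step this is harmless: the $r$ in Theorem~\ref{thm:axorig} is the generic Jacobian rank, which is $\ge$ the rank at $o$, so your first bound remains true (possibly weaker). For the second step it is fatal as stated: the fact ``$\operatorname{trdeg}$ of an extension is at least the rank of the newly adjoined differentials modulo those of the base field'' holds with ranks taken over the function field, not pointwise, because the differentiated algebraic relations have meromorphic coefficients that can degenerate at $o$. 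Concretely, if $K$ is one-dimensional with coordinate $z$, the angle function is $\alpha=z^2$ (critical at $o$), and another joint parameter is $u=z$, then $u$ is algebraic over $\C(\alpha)$, so your second summand is $0$, while $\delta-r=1-0=1$; the claimed bound fails. Nothing in the hypotheses excludes such behaviour --- nonsingularity of $K$ at $o$ does not make the individual joint-parameter projections submersive there --- and since your cancellation requires the \emph{same} $r$ in both bounds, the argument as written does not close. The fix is small: define $r$ as the generic rank of $\{d\alpha_k\}$ throughout; then Ax gives the first bound with this $r$, the differential criterion gives $\operatorname{trdeg}_F F(\vec u,\vec v)\ge\delta-r$ (the generic rank of all joint-parameter differentials is still $\delta$, since they restrict from an embedding), and adding yields $\dim(K_0)\ge\dim(K)+q$. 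A last, minor remark: in the easy direction, what you actually need is not nonsingularity at $o$ but that $K$ is an irreducible \emph{component} of the analytic set cut out of $K_0$ by the $m-\rank(A)$ screw equations; this holds because $K$ is by hypothesis a component of the configuration space of $L$.
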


\begin{proof}
By Lemma~\ref{lem:carving}, we have $\dim(K_0)\le \dim(K)+m-\rank(A)$, so it suffices to show the other inequality.
Let $d:=\dim(K)$, and $q:=m-\rank(A)$.
Let $\pi_H:K\to \C^m$ be the projection to the helical joint parameters $\alpha_{k_1},\dots,\alpha_{k_m}$,
and let $K_H\subset\C^m$ be its image. Similarly, we define $\pi_C:K_0\to\C^m\times(\P^1)^m$ as the
projection to the cylindrical joint parameters of $L'$ and $K_C$ as the image. There is a natural embedding
$K_H\hookrightarrow K_C$, the map $\pi_H$ is the restriction of $\pi_C$ along this embedding, 
and $K_C$ is the Zariski closure of $K_H$ in $\C^m\times(\P^1)^m$. 

Let $d_H:=\dim(K_H)$. Then there is an analytic isomorphism $\phi$ of a neighbourhood $U\subset\C^{d_H}$ of the origin $o$ 
mapping $o$ to the initial configuration. For $k=1,\dots,m$, let $\overline{\alpha_k}:U\to\C$ be the
projection to the joint parameter $\alpha_k$. They generate a $q$-dimensional $\Q$-vector space.
We may assume that $\overline{\alpha_{k_1}},\dots,\overline{\alpha_{k_q}}$ generate this vector space. 
The rank of the Jacobian of $\overline{\alpha_{k_1}},\dots,\overline{\alpha_{k_q}}$ is equal to the
rank of the Jacobian of all coordinate functions, which is equal to $d_H$. By Theorem~\ref{thm:axorig},
the field $\C(\overline{\alpha_{k_1}},\dots,\overline{\alpha_{k_q}},
e^{\overline{\alpha_{k_1}}},\dots,e^{\overline{\alpha_{k_q}}})$
has transcendence degree at least $d_H+q$ over $\C$. This field is $\C$-isomorphic to the function field of $K_C$,
by the isomorphism
\[ \overline{\alpha_{k_1}} \mapsto g_{k_1}^{-1}s_{k_1}, \dots, \overline{\alpha_{k_q}} \mapsto g_{k_q}^{-1}s_{k_q},
   e^{\overline{\alpha_{k_1}}} \mapsto \frac{1+\ci t_{k_1}}{1-\ci t_{k_1}}, \dots, 
   e^{\overline{\alpha_{k_q}}} \mapsto \frac{1+\ci t_{k_q}}{1-\ci t_{k_q}} . \]
Therefore $\dim(K_C)\ge d_H+q$. 

Let $E\subset K_C$ be the set of all points $x$ such that $\dim(\pi_C^{-1}(x)) > \dim(K_0)-\dim(K_C)$.
Since dimension is upper semicontinuous in the Zariski topology, $E$ is a proper algebraic subvariety of $K_C$.
Since $K_H$ is Zariski dense in $K_C$, it is not contained in $E$. Therefore the generic fibre of $\pi_H:K\to K_H$
has dimension $\dim(K_0)-\dim(K_C)$. Hence we have
\[ \dim(K_0) = \dim(K_C)+\dim(K)-\dim(K_H) \ge d_H+q+\dim(K)-d_H = \dim(K)+q . \]
\end{proof}

From the cylindrical extension $L_c$, we may construct families $F_p$ and $F_r$ of linkages by setting
either the rotation parameters or the translation parameters of the joints $j_{k_1},\dots,j_{k_q}$ to fixed values
$(\tau_{k_1},\dots,\tau_{k_q})$ respectively $(\sigma_{k_1},\dots,\sigma_{k_q})$,
where $\overline{\alpha_{k_1}},\dots,\overline{\alpha_{k_q}}$ generate the $\Q$-vector space of all
angle parameter functions at the helical joints. This imposes exactly $q$ additional equations to the configuration
set $K_0$, hence the mobility of any linkage in one of the two families is greater than or equal to the
mobility of $L$. In family $F_r$, every C-joint is replaced by an R-joint, and in family $F_p$, every
C-joint is replaced by a P-joint. 

We choose generic members for both families and call them $L_p$ and $L_r$.
The angles and orthogonal distances of neighbouring R-joints is constant for both families, and is equal
to the value of the corresponding parameter of $L$. The offsets do change, but in a transparent way:
if $j_2$ is an H-joint and $j_1,j_3$ are H- or R-joints in $L$, then the offset of the axes at the corresponding
axes in the family $F_r$ is a linear non-constant function in the family parameter $\sigma_2$. Similarly,
the angle between the directions of $j_1$ and $j_3$ from $j_2$ in spherical geometry change with the 
family parameter $\tau_2$. In particular, the generic family member $L_r$ has nonzero offset and the 
generic family member $L_p$ has nonzero angles at this place.

Linkages with helical joints are called degenerate if there are neighbouring R- or H-joints with equal axes, neighbouring
P-joints with equal directions, or an H-joint with a neighbouring P-joint in the direction of the axis of the
H-joint. In all these cases it is easily possible to simplify the pair of neighboring joints from RR to R, HH to H or C,
HR to C, PP to P, or HP to C. %Then the 5-linkage reduces to a 4-linkage, possibly with one C-joint.

Assume that $L$ is nondegenerate.
Because the axes/directions of $L_r$ and $L_p$ are equal to the axes/directions of an instance of $L_c$ after
application of a motion in $K_0$, $L_r$ is also nondegenerate, and $L_p$ can only be degenerate if $L$ has
neighbouring H-joints with parallel axes. However, it may happen that some joints of $L_r$ or $L_p$ remain fixed
during the motion, even if this is not the case for $L$.

The existence of mobile linkages with only P- and R-joints with particular properties as a consequence of the 
existence of mobile linkages with H-joints allows to classify the 5-linkages with H-joints. 
We do that by constructing the linkages $L_r$ and $L_p$ and then compare with the classifications
in Section~\ref{non-helical} and Theorem~\ref{thm:p}. We also need the classification of 4-linkages, because
it may be that some joints of $L_r$ or $L_p$ are fixed. For convenience, we write here the facts on
4-and 5-linkages that are used below. For the first 4 facts, we assume that $L$ is a nondegenerate mobile linkage 
with R- and P-joints, such that every joint actually moves. Let $n$ be the number of joints of $L$.

\begin{enumerate}
\item If $n=4$ and $L$ has a P-joint, then all axes of R-joints are parallel 
	(this is a special case of Delassus' theorem \cite{Delassus}).
\item If $n=4$ and $L$ has no P-joints, then either all axes are parallel, 
	or no neighbouring pair of axes is parallel
	and all offsets are zero (see \cite{bhs}).
\item If $n=5$ and $L$ has a P-joint, then either all axes of R-joints are parallel, or $j_1$ is a P-joint,
	all other joints are of type R, and $h_2||h_3$ and $h_4||h_5$ (Theorem~\ref{thm:p}). 
\item If $n=5$ and $L$ has no P-joints, then either all axes are parallel, 
	or no neighbouring pair of axes is parallel
	and at least three of the five offsets are zero (see Section~\ref{sec:bonds}).
\item Any movable CRP linkage is degenerate, i.e., either the axis of the C-joint 
	and the axis of the R-joint coincide,
	or the axis of the C-joint is parallel to the direction of the P-joint.
	Here we leave the proof to the reader.
\end{enumerate}

Here is the classification of 5-linkages with joints of type R, P, or C, based on Theorem~\ref{thm:ax}
(compare also with Delassus's classification~\cite{Delassus} of 4-linkages with these three types of joints).

\begin{thm} \label{thm:h5}
Let $L$ be a non-degenerate mobile 5-linkage with R-, P-, and H-joints, with at least one H-joint, such that all
joints actually move. Up to cyclic permutation, the following cases are possible.
\begin{enumerate}
\item All axes of R- and H-joints are parallel. 
\item There is one P-joint $j_1$, all other joints are of type H or R, $h_2||h_3$ and $h_4||h_5$.
\end{enumerate}
\end{thm}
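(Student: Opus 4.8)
The plan is to use Theorem~\ref{thm:ax} to trade the analytic closure equation for two purely algebraic linkages. Write $p,r,m$ for the numbers of P-, R-, and H-joints, so $p+r+m=5$ and $m\ge 1$. By Theorem~\ref{thm:ax}, $L$ arises by screw carving from its cylindrical extension, and I would pass to the two generic family members $L_r$ and $L_p$ built from it in the paragraphs preceding this theorem: in $L_r$ each former H-joint becomes an R-joint, in $L_p$ a P-joint. Thus $L_r$ has $p$ P-joints and $r+m$ R-joints, $L_p$ has $p+m$ P-joints and $r$ R-joints, both are mobile, and --- this is what makes the transfer work --- their axes and directions are congruent to those of $L$, so every parallelism relation holds in $L$ if and only if it holds in $L_r$ and in $L_p$. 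The argument then reduces entirely to the R/P-classification recalled above (Lemma~\ref{lem:3p}, Theorem~\ref{thm:p}, and the $4$- and $5$R-facts), steered by the two dictionaries also established above: in $L_r$ the offset centred at each former H-joint is generically nonzero, while in $L_p$ each former H-joint is an honest P-joint.

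I would then split on $p$. If $p\ge 2$, then $L_r$ is a mobile R/P $5$-linkage with at least two P-joints, so Lemma~\ref{lem:3p} makes all its rotation axes parallel; these are precisely the R- and H-axes of $L$, which is case (1). If $p=1$, then $L_r$ has one P-joint and four R-joints, and Theorem~\ref{thm:p} leaves two options: either all rotation axes of $L_r$ are parallel, giving case (1), or after a cyclic shift the P-joint is $j_1$ and $h_2\parallel h_3$, $h_4\parallel h_5$; since $j_2,\dots,j_5$ are then of type R or H, this is exactly case (2).

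The case $p=0$ must land in case (1), and I would prove this by bootstrapping partial parallelism to full parallelism. Here $L_p$ has $m$ P-joints and $L_r$ is a genuine $5$R-linkage. For $m\ge 2$, Lemma~\ref{lem:3p} already forces all R-axes of $L$ to be parallel; I then use $L_r$ to include the H-axes. When $m\ge 3$ the offset dictionary gives at least three nonzero offsets in $L_r$ (one per former H-joint, all of whose neighbours are R or H because $p=0$), so $L_r$ cannot be a Goldberg linkage and must have all axes parallel. When $m=2$ we have $r=3$, and since any three joints of a $5$-cycle contain an adjacent pair, $L_r$ carries two neighbouring parallel axes and is again forced to be fully parallel by the $5$R-classification. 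Finally $m=1$: Theorem~\ref{thm:p} applied to $L_p$ produces a neighbouring pair of parallel R-axes, so the same $5$R-argument on $L_r$ yields case (1).

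The step I expect to be the genuine obstacle is that a generic $L_r$ or $L_p$ may have joints that stay fixed throughout the motion although every joint of $L$ moves, so the $5$-linkage classification does not apply verbatim. I would treat this by deleting each fixed joint and invoking the corresponding $4$-linkage statements (Delassus' theorem and the $4$R-classification recalled above), after verifying that the offset and angle dictionaries survive the deletion. A related subtlety is that $L_p$ is itself degenerate precisely when $L$ has neighbouring H-joints with parallel axes; there one simplifies the offending pair of P-joints before re-running the argument, and the presence of those parallel H-axes already steers the conclusion into case (1). Exhaustiveness of cases (1)--(2) then follows by assembling the four ranges of $p$; the realizability of both cases is provided by the screw-carving constructions of Section~\ref{construction-helical}.
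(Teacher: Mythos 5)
Your proposal shares the paper's skeleton---Theorem~\ref{thm:ax}, the generic family members $L_r$ and $L_p$ from the cylindrical extension, the offset/angle dictionaries, and comparison with the R/P classification---only reorganized by the number $p$ of P-joints instead of by blocks of parallel axes. But the difficulty you flag at the end (joints of $L_r$ or $L_p$ may be fixed even though every joint of $L$ moves) is not a deferrable technicality: it is where essentially the whole of the paper's proof lives, and your patches fail on a concrete configuration. Take $p=0$, $m=2$, with $j_1,j_2$ H-joints, $h_1||h_2$, and $j_3,j_4,j_5$ R-joints whose axes are pairwise non-parallel and not parallel to $h_1$. This lies in neither case (1) nor case (2), so it must be proved impossible. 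Your $m=2$ step applies Lemma~\ref{lem:3p} to $L_p$; but the three R-joints of $L_p$ cannot move at all (their spherical projection is a rotation loop with three pairwise distinct axes), so the standing hypothesis of Section~\ref{non-helical} fails and the lemma gives nothing. Your fallback then collapses: $L_p$ is indeed degenerate (two neighbouring P-joints with equal directions, since $p_1=\pm p_2$), but after simplifying that pair one gets a linkage that is genuinely mobile---the two translations compensate each other while the R-joints stay put---so ``re-running the argument'' produces no contradiction; and the assertion that the parallel H-axes ``already steer the conclusion into case (1)'' is circular, since whether $h_3,h_4,h_5$ must be parallel to them is exactly the point at issue. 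The $L_r$ side is equally silent: by Facts 3--4 one of $j_1,j_2$ must be frozen in $L_r$, leaving a 4R linkage on $h_2,h_3,h_4,h_5$, which may perfectly well be a Bennett linkage; the offsets it forces to vanish are centred at $j_3,j_4$, which are R-joints of $L$, so your offset dictionary is not violated. A similar untreated scenario (H-joints at $j_1,j_2,j_5$, no parallel axes anywhere, $j_1$ frozen in $L_r$) defeats your $m\ge 3$ step, because after deletion the dictionary controls none of the four offsets of the resulting Bennett linkage centred away from $j_3,j_4$.

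What closes these cases in the paper is machinery your proposal never invokes: a mobility count for the cylindrical extension $L_c$ (mobility $\ge 3$ would, after freezing one C-joint, yield a mobile CRRR linkage with no parallel neighbouring axes, contradicting Delassus' theorem, Fact 1), which forces $\rank(A)=1$, i.e.\ a $\Q$-linear relation $a\alpha_1+b\alpha_2=\mathrm{const}$; this relation is then incompatible with one of $t_1,t_2$ being constant and the other non-constant on the configuration set of $L_r$. The subcase in which $j_5$ is an H-joint needs in addition the reduction to a CRRC linkage and Fact 5 on CRP linkages, and the no-parallel-axes case needs the argument that three moving P-joints of $L_p$ must have coplanar directions, against the nonzero-angle dictionary at former H-joints. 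Unless you import these arguments (or equivalents), your case $p=0$---and in general every branch where $L_r$ or $L_p$ acquires fixed joints---cannot be completed. Two smaller slips: in your $m\ge 3$ step the 5R classification has three cases, and the spherical one (whose offsets also all vanish) must be excluded alongside Goldberg; and in your $p\ge 2$ step Lemma~\ref{lem:3p} again does not apply verbatim to $L_r$---there the clean fix is to argue directly on the spherical projection of $L$ itself, whose at most three rotation joints all move by hypothesis.
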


\begin{proof}
Let $r$ be the number of neighbouring blocks of equal axes of the spherical projection $L_s$.
The proof proceeds by case distinction on $r$. The cases $r=4$ and $r=5$ are split into two
subcases.

\begin{description}
\item[Case $r=1$] Then all axes of $L_s$ are equal, hence all axes of H- and R-joints of $L$ are parallel;
this is possibility~(1) of the theorem.

\item[Case $r=2$] Then each of the two blocks of R-joints in $L_s$ has at least two joints and at most three joints, 
because a single joint could not move. The linkage $L_r$ is movable and therefore
has at least four joints that actually move. In particular, it cannot happen that all axes of $L_r$ that
actually move are parallel. After removing the joints that remain fixed, $L_r$ still has two blocks 
of parallel axes. By comparing with Facts~3 and 4 above, it follows that $L_r$ is a
PRRRR linkage; if, say, $j_1$ is the prismatic joint, then $h_2||h_3$ and $h_4||h_5$.
This is the possibility~2 of the theorem.

\item[Case $r=3$] There is at least one group of joints of the spherical projection $L_s$ with only one R-joint. 
This joint cannot move.
Hence the corresponding H- or R-joint of $L$ does not move either, contradicting our assumption. So
this case is impossible.

\item[Case $r=4$] 
If $L$ has a P-joint, then $L_r$ is a mobile and nondegenerate PRRRR linkage without any parallel rotation axes. 
Such a linkage does not exist, hence $L$ has no $P$-joint. Then we have two parallel neighbouring axis of $L$.
Up to cyclic permutation, we may assume $h_1||h_2$, and the other directions of axes are not parallel. There is
no 4R or 5R linkage with exactly two parallel axes (see Fact~3). 
Therefore at least one of the two joints with parallel axes $h_1,h_2$ must be fixed in $L_r$.
Without loss of generality, we may assume that $L_r$ is a 4R linkage with axes $h_2,h_3,h_4,h_5$.
By Fact~2, all offsets of $L_r$ are zero.
In particular, $o(h_2,h_3,h_4)=o(h_3,h_4,h_5)=0$. Hence $j_3$ and $j_4$ are R-joints in $L$.
Hence $h_3$ and $h_4$ are axes of $R$-joints of $L_p$. On the other hand, they are not parallel to each
other and not parallel to the remaining axes. In any movable 4-linkage or 5-linkage with joints of type R and P,
at least one P, any revolute axis is in a block of at least two parallel axes (see Fact~1). It follows that
the two joints with axes $h_3$ and $h_4$ must be fixed in $L_p$. Hence $L_p$ has at most three joints that
actually move. This is not possible if $L_p$ has an R-joint that actually moves. Since $h_1||h_2$ and $h_5$
is not parallel to both, the joint of $L_p$ corresponding to $j_5$ is also fixed, and $L_p$ is a degenerate
linkage with two P-joints sharing the same direction. It follows that $j_1$ and $j_2$ are H-joints in $L$.
The type of joint $j_5$ may be either $H$ or $R$.

\begin{description} 
\item[Subcase~1] $j_5$ is an R-joint. The cylindrical extension $L_c$ has two C-joints with axes $h_1$ and $h_2$.
If the mobility of $L_c$ is 3 or higher, then we freeze one of the two C-joints, and
we get mobile CRRR linkage without parallel neighbouring axes, contradicting Fact~1. Therefore the mobility of $L_c$
is 2, and we must have a $\Q$-linear relation between the angle functions.
But in $L_r$, the joint corresponding to $j_1$ is fixed, and the joint corresponding to $j_2$ moves.
This is a contradiction.

\item[Subcase~2] $j_5$ is an H-joint. If the mobility of $L_c$ is 2, then we may argue as in Subcase~1:
the three angle functions at the C-joints of $L_c$ have to generate a one-dimensional $\Q$-vector space
of $L_c$. Especially, there is a $\Q$-linear relation between the angle functions at the joints corresponding
to $j_1$ and $j_2$. And in $L_r$, the joint corresponding to $j_1$ is fixed, and the joint corresponding to $j_2$ moves,
which gives a contradiction. Hence the mobility of $L_c$ is at least 3.
Now we freeze the R-joint with axes $h_3$ and the translation component at the joint corresponding to $j_2$.
We get a mobile CRRC linkage $L'$ with joint axes  $h_1,h_2,h_4,h_5$. Since $h_1||h_2$, and
$h_4$ and $h_5$ have different directions, the spherical projection of $L'$ is fixed at the two joints with
axes $h_4$ and $h_5$. 
Hence $L'$ is fixed at the joint with axes $h_4$ and the C-joint with axes $h_5$ can be replaced by a P-joint.
We may consider $L'$ as a CRP linkage with axes $h_1,h_2$, and translation direction $p_5$. By Fact~5, either
$h_1\equiv h_2$ or $h_1||h_5$. Neither is possible.
\end{description}

\item[Case $r=5$] Then $L_r$ is either a 5R-linkage or a 4R-linkage with an extra immobile R-joint. 
\begin{description}
\item[Subcase~1]
If $L_r$ is a 5R-linkage, then $L_r$ has at least three vanishing offsets, by Fact~4. Hence $L$ has at least one and at most two H-joints,
and $L_p$ has at least one at most two P-joints. Because $L_p$ has no parallel rotation axes, all its R-joints
must be fixed. Then $L_p$ must have two P-joints with equal directions. On the other hand, $L$ has no parallel
axes. This is a contradiction.

\item[Subcase~2]
Without loss of generality, let us say that the joint with axis $h_1$ is fixed in $L_r$. By Fact~2, all offsets of $L_r$
are zero. In particular, $o(h_2,h_3,h_4)=o(h_3,h_4,h_5)=0$. Hence $j_3$ and $j_4$ are R-joints. Again, $L_p$ has
no parallel rotation axes, so all its R-joints must be fixed. If $L_p$ has two moving P-joints, then they
would have to parallel, which is not possible. Hence $L_p$ has three moving P-joints with direction $p_5,p_1,p_2$.
Then these three vectors have to linear dependent, which is equivalent to saying that the angle between $p_5$
and $p_2$ from $p_1$ (in spherical geometry) is equal to $0$. On the other hand, $L_p$ has a nonzero angle
at every joint corresponding to an H-joint of $L$. This is a contradiction.
\end{description}
\end{description}
\end{proof}

\begin{example}
Here is an example that shows that the second case is indeed possible.

Let $h_2,h_3$ be parallel lines with a distance $a$ to each other. Let $h_4,h_5$ be another pair of parallel
lines, not parallel to the first pair, also with distance $a$ to each other. We assume that $p_2=-p_3$
and $p_4=-p_5$. Let $g_2,g_4\in\R$. For any $\alpha_2=\alpha_3\in\R$, the composed motion
\[ m_2m_3=(1-\eps g_2\alpha_2 p_2)(1-\tan\left(\frac{\alpha_2}{2}\right)h_2)
	(1-\eps g_2\alpha_3 p_3)(1-\tan\left(\frac{\alpha_3}{2}\right)h_3) \]
is a translation, where the translation vector lies on a circle in a plane orthogonal to $p_2$ with radius $a$.
Similarly, for $\alpha_4=\alpha_5\in\R$ the composed motion $m_4m_5$ is a translation with translation vector
on another circle with the same radius. We can choose a parametrisation such that the motion $m_2m_3m_4m_5$ is
a translation in a fixed direction ($p_2+p_4$ or $p_2-p_4$). Hence the linkage with P-joint in this direction
and H-joints with axes $h_2,h_3,h_4,h_5$ and pitches $g_2,g_2,g_4,g_4$ is mobile.
\end{example}

\section{Conclusion}

Using Ax's theorem and screw carving, it is possible to investigate mobility questions for arbitrary linkages
with helical joints. The classification of mobile closed 5-linkages with joints of type R, P, or H, given in this paper,
is just a first application of this reduction. 

A challenge for future research is the classification of mobile closed 6-linkages with helical joints. In contrast to 
the case of 5-linkages, reduction to linkages with joints of type R or P will not be enough, because our knowledge
of that linkages is still quite incomplete: even the classification of mobile closed 6R linkages is an open problem.
But there is a reason to believe that classifying 6-linkages with at least one H-joint is substantially easier than
the 6R case: the linkages constructed by cylindrical extension and fixing either the rotational or the translational
parameter have properties that could help the classification (for instance, generic offsets). Another possible attempt
would be to extend the theory of bonds to linkages with C-joints and to configuration sets where a fixed set of angle
functions satisfy ${\bf Z}$-linear equations.

\section*{Acknowledgements}
This research was supported by the Austrian Science Fund (FWF):
W1214-N15, project DK9.

\bibliographystyle{plain}
\bibliography{helical}

\hspace{0.5cm}

Johann Radon Institute for Computational and Applied Mathematics (RICAM), Austrian Academy of Sciences,  
Altenberger Str. 69, A-4040 Linz, Austria

\url{hamid.ahmadinezhad@oeaw.ac.at}

\url{zijia.li@oeaw.ac.at}

Research Institute for Symbolic Computation (RISC), Johannes Kepler University, 
Altenberger Str. 69, A-4040 Linz, Austria 

\url{josef.schicho@risc.jku.at}

\end{document}